\journal{Pattern Recognition Letters}
\newcommand\numberthis{\addtocounter{equation}{1}\tag{\theequation}}
\DeclareMathOperator*{\argmin}{arg\,min}
\theoremstyle{definition}
\newtheorem{definition}{Definition}
\newtheorem*{definition*}{Definition}
\newtheorem*{example*}{Example}
\theoremstyle{theorem}
\newtheorem{theorem}[definition]{Theorem}
\newtheorem*{theorem*}{Theorem}
\newtheorem*{lemma*}{Lemma}
\newtheorem*{proposition*}{Proposition}
\newtheorem*{corollary*}{Corollary}
\newtheorem*{remark*}{Remark}
\newtheorem*{claim*}{Claim}
\newtheorem*{problem*}{Problem}
\newtheorem*{observation*}{Observation}
\newcommand*\diff{\mathop{}\!\mathrm{d}}
\begin{document}

\begin{frontmatter}

\title{Neural Posterior Regularization for Likelihood-Free Inference}

\author[add1]{Dongjun Kim}
\author[add2]{Kyungwoo Song}
\author[add1]{Seungjae Shin}
\author[add3]{Wanmo Kang}
\author[add1]{Il-Chul Moon}
\author[add4]{Weonyoung Joo\corref{cor}}
\ead{weonyoungjoo@ewha.ac.kr}

\address[add1]{Department of Industrial and Systems Engineering, Korea Advanced Institute of Science and Technology (KAIST), Daejeon, Republic of Korea}
\address[add2]{Department of Artificial Intelligence, University of Seoul, Seoul, Republic of Korea}
\address[add3]{Department of Mathematical Sciences, Korea Advanced Institute of Science and Technology (KAIST), Daejeon, Republic of Korea}
\address[add4]{Department of Statistics, EWHA Womans University, Seoul, Republic of Korea}
\cortext[cor]{Corresponding author}

\begin{abstract}
A simulation is useful when the phenomenon of interest is either expensive to regenerate or irreproducible with the same context. 
Recently, Bayesian inference on the distribution of the simulation input parameter has been implemented sequentially to minimize the required simulation budget for the task of simulation validation to the real-world. 
However, the Bayesian inference is still challenging when the ground-truth posterior is multi-modal with a high-dimensional simulation output. 
This paper introduces a regularization technique, namely Neural Posterior Regularization (NPR), which enforces the model to explore the input parameter space effectively. 
Afterward, we provide the closed-form solution of the regularized optimization that enables analyzing the effect of the regularization. 
We empirically validate that NPR attains the statistically significant gain on benchmark performances for diverse simulation tasks. 
\end{abstract}

\begin{keyword}
Likelihood-Free Inference, Simulation Parameter Calibration, Generative Models
\end{keyword}

\end{frontmatter}


\section{Introduction}

Recently, enhanced computing power has motivated the construction of highly complex simulation models. However, these high-resolution simulations achieve high precision only if we calibrate the adjustable simulation input parameters because otherwise, the simulation outcome significantly varies from the real world. The Bayesian inference \cite{papamakarios2019sequential, greenberg2019automatic} is one possible yet general framework for such calibration task. The Bayesian inference, however, is challenging because the likelihood function of the simulation is intractable \citep{papamakarios2016fast} in general.

\begin{figure}[t]
    \centering
    \subfigure[True]{\includegraphics[width=.32\linewidth]{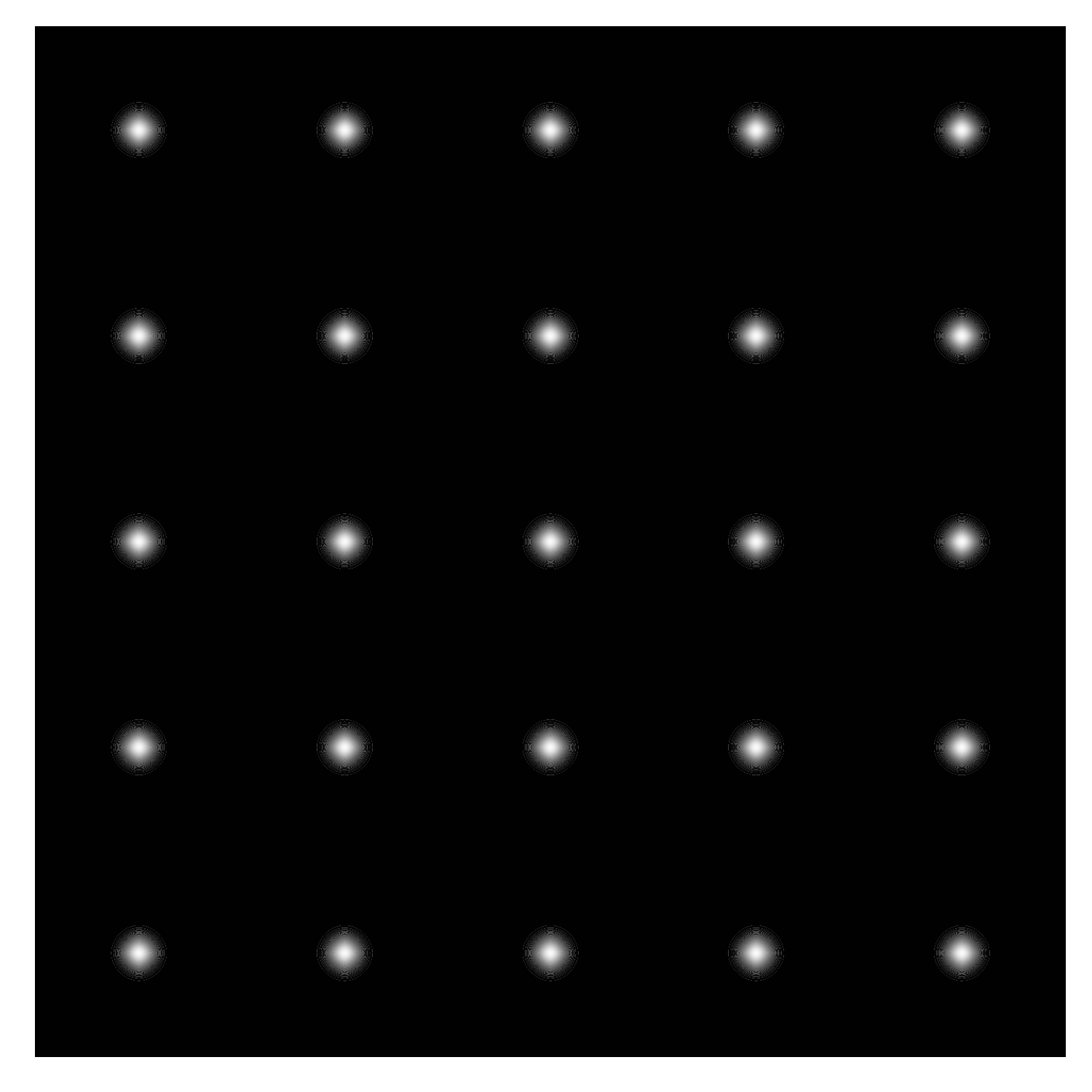}}
    \centering
    \subfigure[SNL]{\includegraphics[width=.32\linewidth]{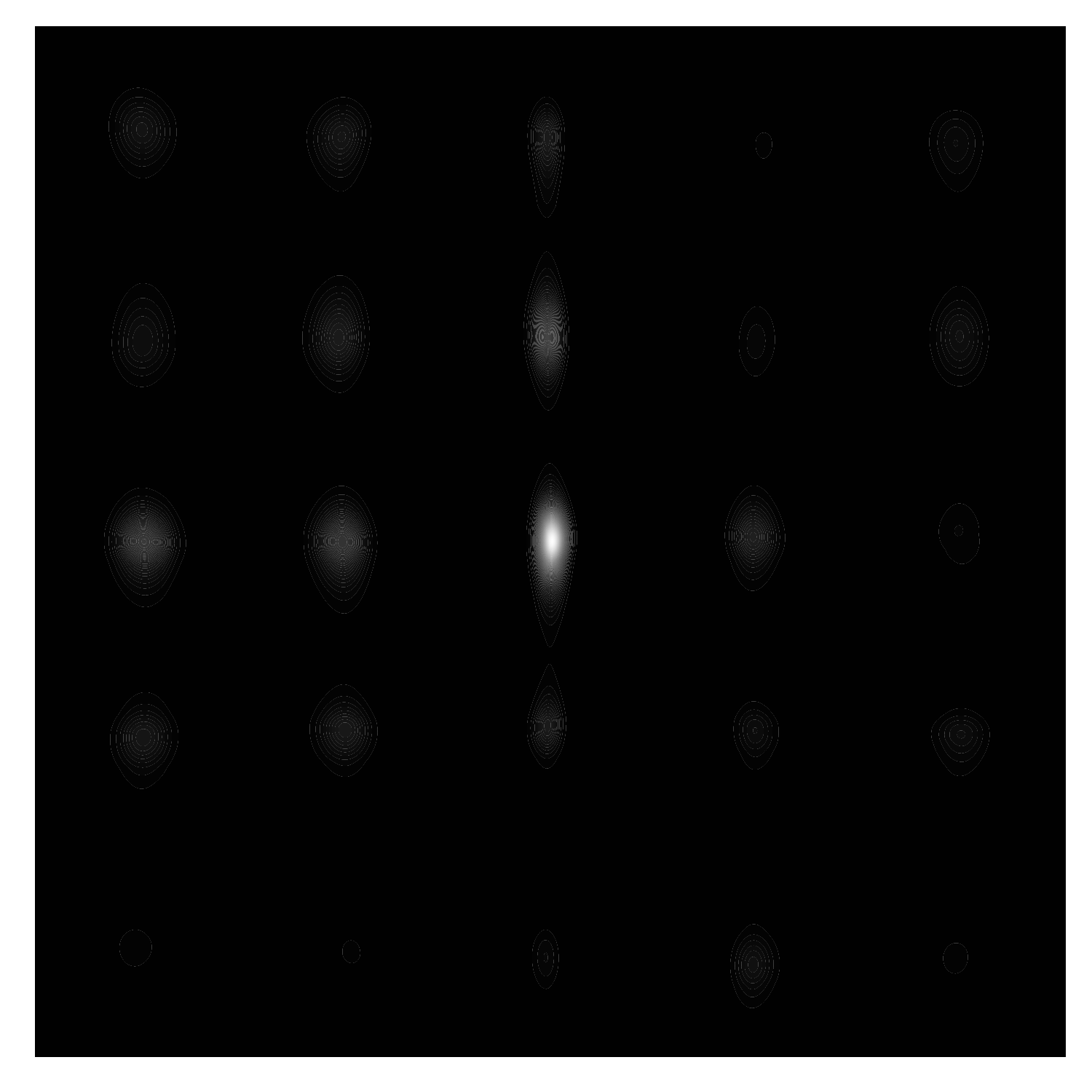}}
    \centering
    \subfigure[SNL+NPR]{\includegraphics[width=.32\linewidth]{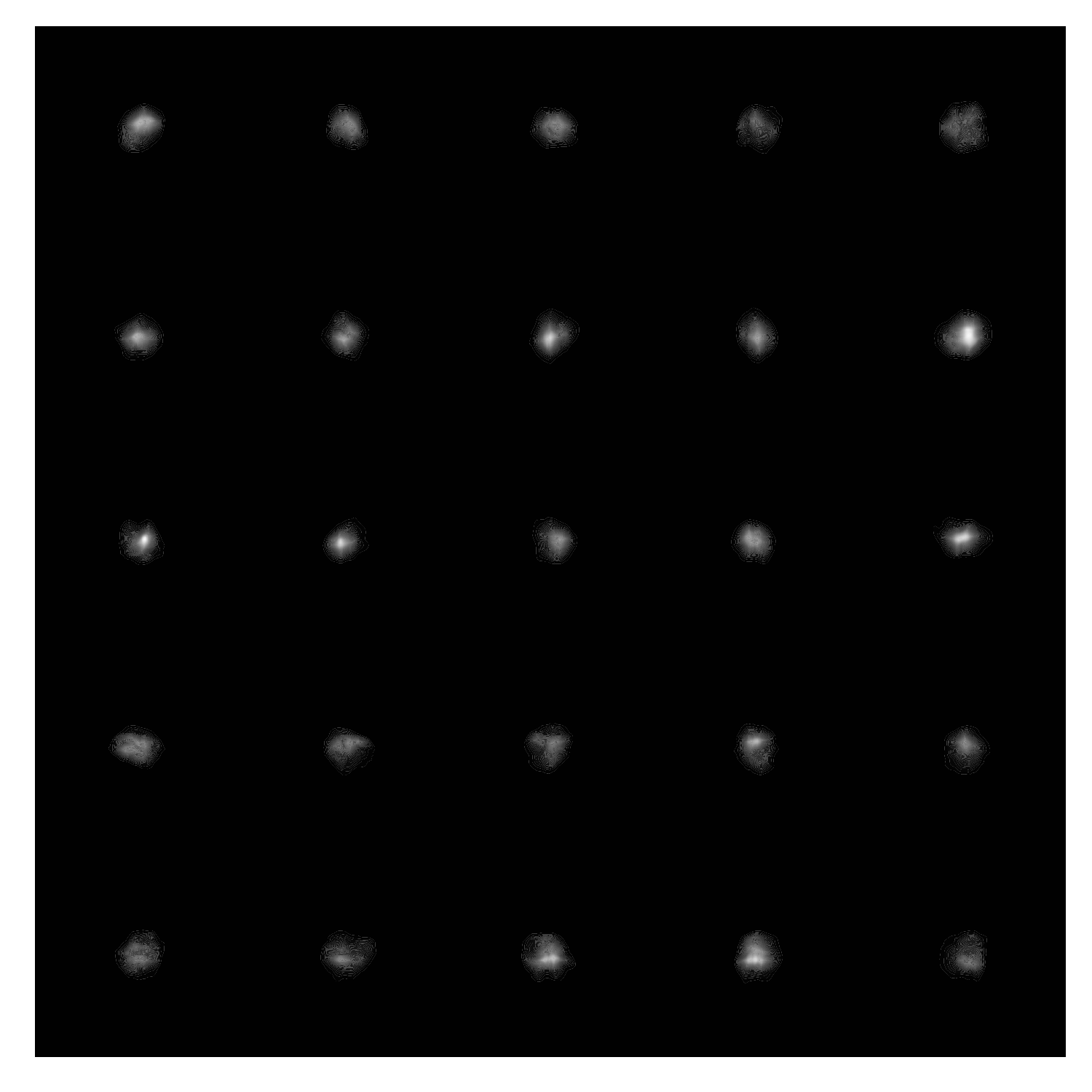}}
    \vskip -0.05in
    \caption{Comparison of (a) the true posterior and the approximate posteriors inferred by (b) SNL and (c) SNL+NPR. We select $(\theta_{1},\theta_{2})$ from a uniform distribution on $[0,1]^{2}$ and synthetically generate the outcome from a Gaussian distribution $\mathcal{N}([\cos{5\pi\theta_{1}},\cos{5\pi\theta_{2}}],0.1\mathbf{I})$ with $\mathbf{x}_{o}=[0,0]$.}
    \label{fig:comparison_2d}
    \vskip -0.1in
\end{figure}

\begin{figure*}[t]
    \centering
    \subfigure[Overfitting]{\includegraphics[width=.32\linewidth]{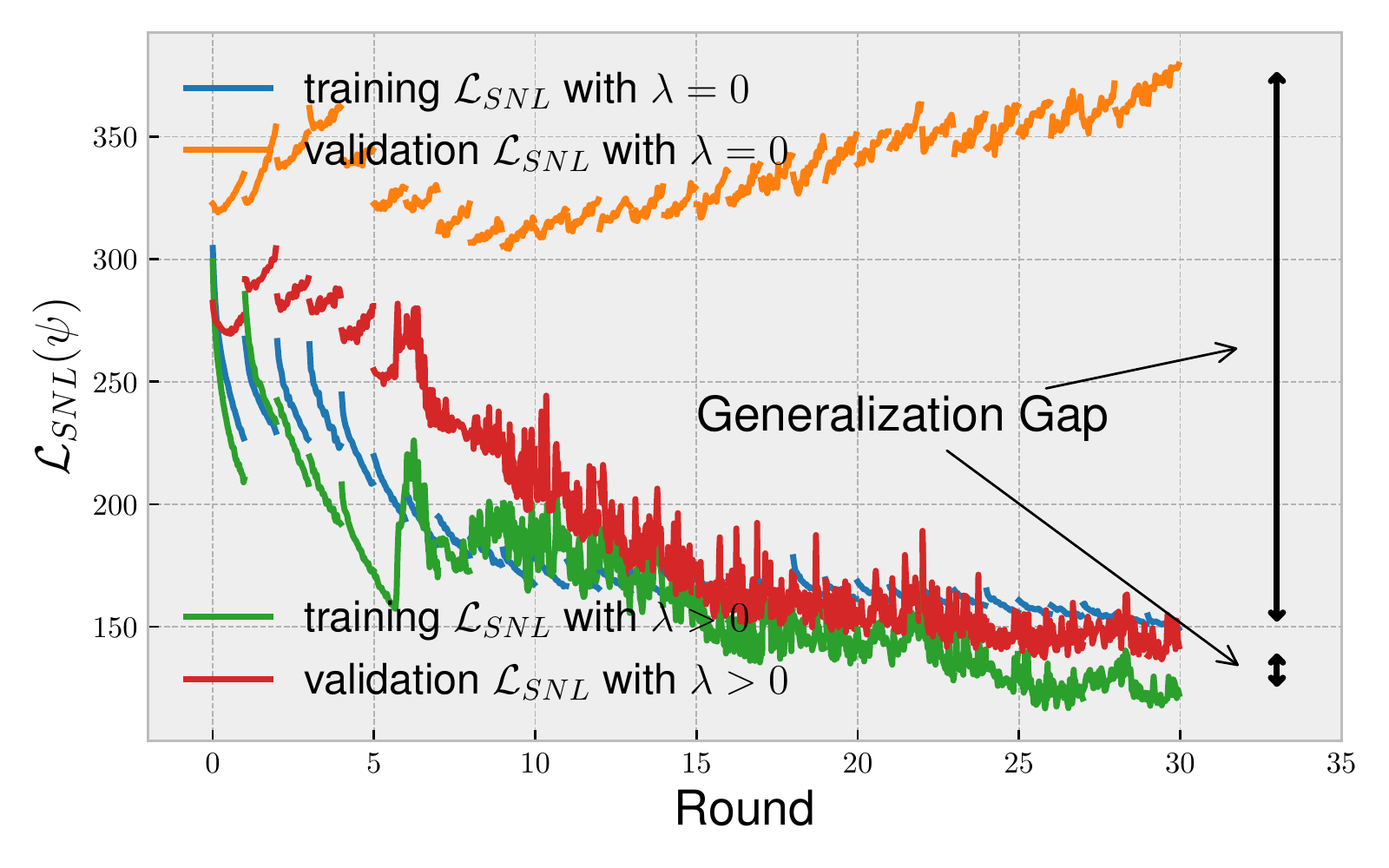}}
    \centering
    \subfigure[Poor Inference]{\includegraphics[width=.32\linewidth]{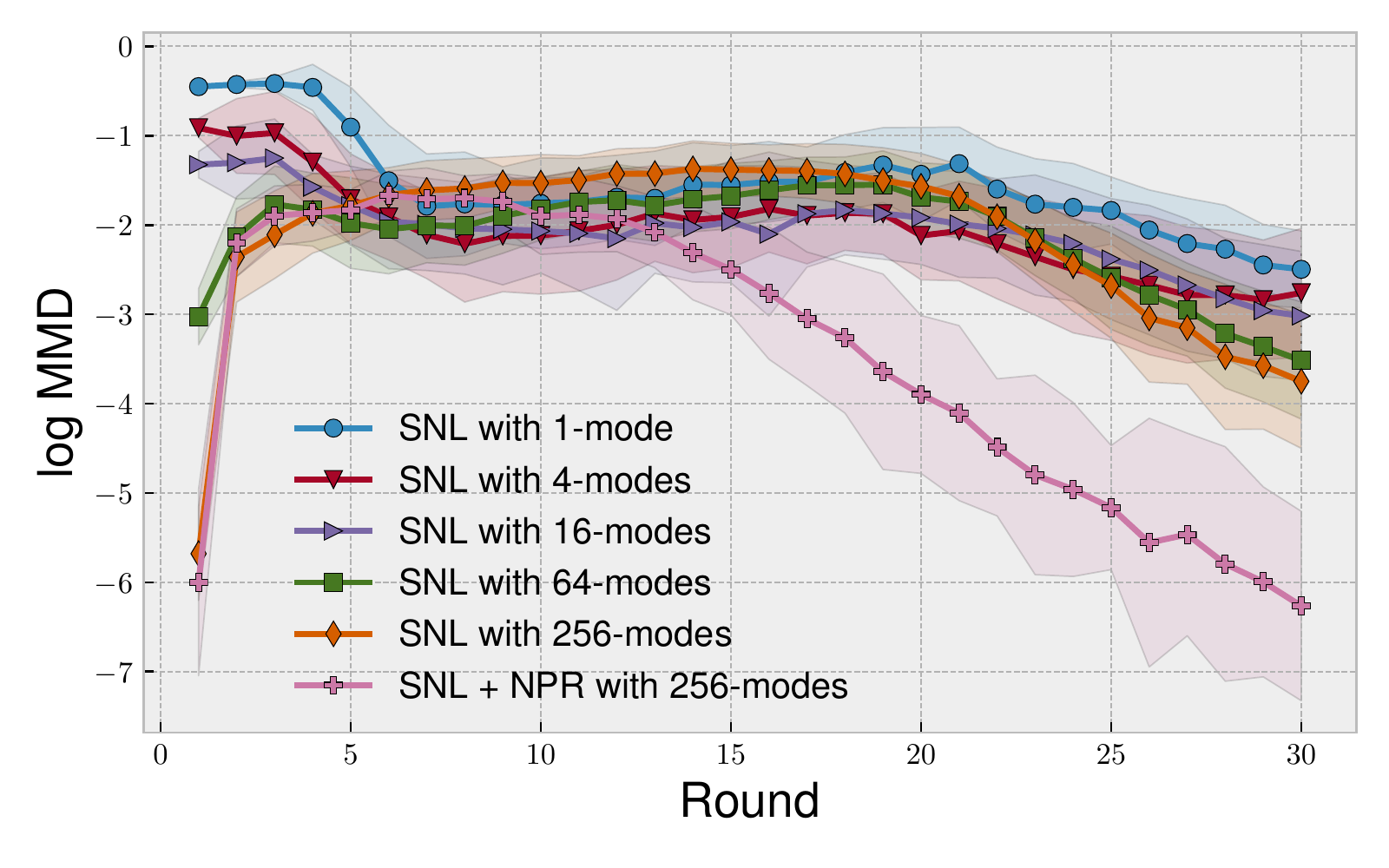}}
    \centering
    \subfigure[Non-scalability]{\includegraphics[width=.32\linewidth]{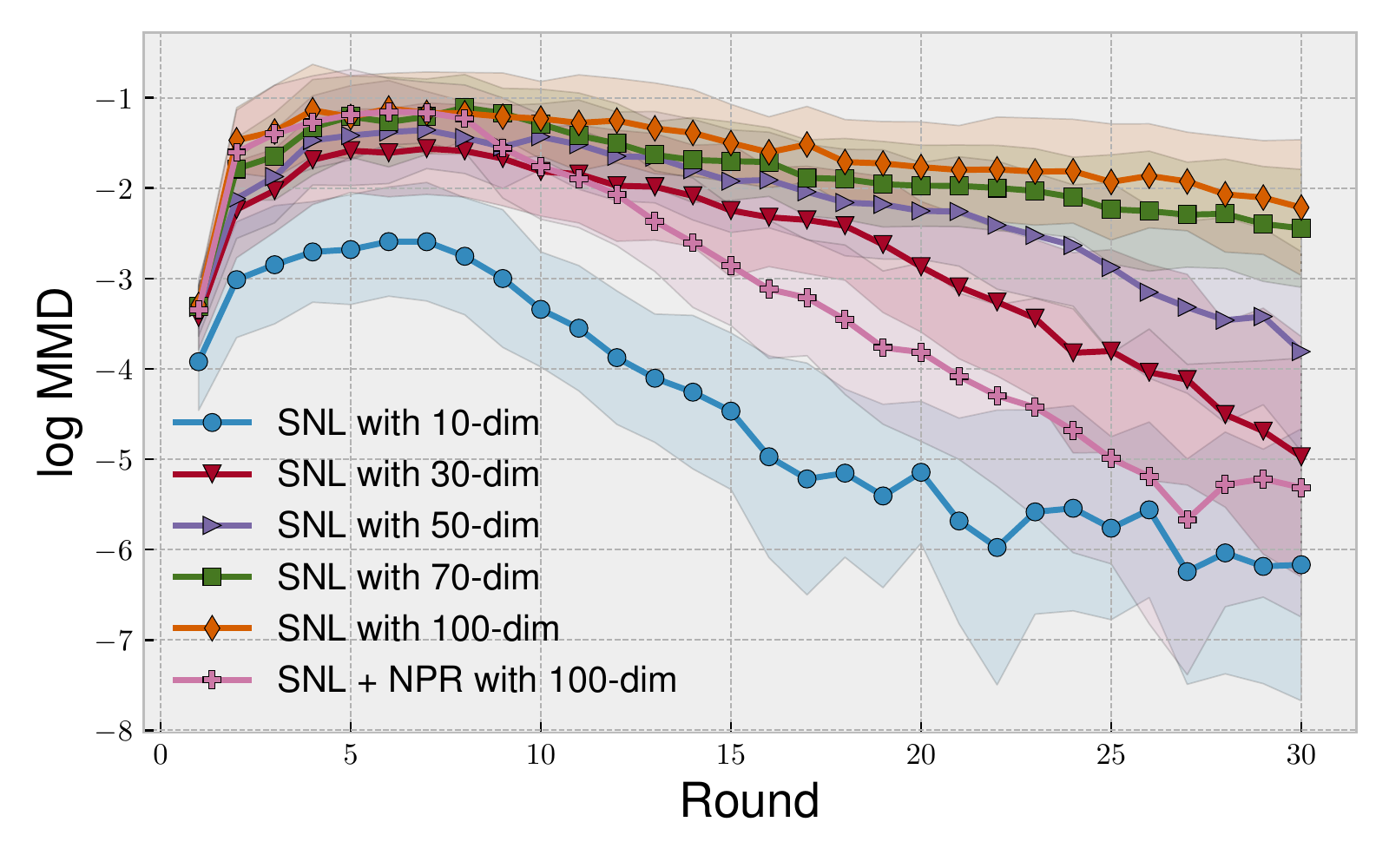}}
    \vskip -0.05in
    \caption{SNL with no regularization ($\lambda=0$) on SLCP-16 \citep{kim2020sequential} suffers from (a) large generalization gap (b) inaccurate posterior estimation on multi-modal posterior (c) slow learning on high dimensional output.}
    \label{fig:SNL}
    \vskip -0.05in
\end{figure*}

Recent algorithms on the simulation-based inference, or \textit{likelihood-free inference} \cite{papamakarios2016fast}, parametrizes the model posterior distribution with a neural network. However, not every neural network-based \textit{likelihood-free inference} is advantageous over previous practices in terms of the amount of simulation budget. Besides, as the simulation complexity increases, the posterior distribution is likely to attain multi-modalities \citep{aushev2020likelihood}, and these factors combine to lead the inference to a highly challenging task. For instance, Fig. \ref{fig:comparison_2d} illustrates that the inference fails with limited data on a toy simulation with a multi-modal posterior.

We introduce a regularization technique to solve this multi-modal issue, particularly in a high-dimensional case \cite{kim2020sequential}. This regularization is a mixture of the reverse KL divergence and the mutual information that behaves in comparative ways. In the middle rounds of \textit{likelihood-free inference}, the reverse KL regularization encourages the mode-centered dataset because it has the mode-seeking property; and the mutual information regularization estimates the complex posterior within the budget since it captures the rich representation. As the regularization is intractable in its vanilla form, we provide the proxy of the regularization that is tractable, namely Neural Posterior Regularization (NPR). Afterward, we analyze this NPR theoretically by providing the closed-form solution of the regularized loss. Also, we empirically demonstrate its advances under the multi-modal and high-dimensional settings of simulations.

The remainder of this paper is organized as follows. In Section \ref{sec:PR}, we present the preliminaries of the \textit{likelihood-free inference} and its hurdles. To overcome such difficulties of the current practice, we propose a new regularization in Section \ref{sec:NPR}, and its detailed estimation together with theorems in Section \ref{sec:Methodology}. Motivated by the theorems, we suggest an efficient scheduling method for the regularization coefficient in Section \ref{sec:regularization_coefficient}. Empirical results in various simulations are presented in Section \ref{sec:Experiments}. 

\section{Preliminary}\label{sec:PR}

\subsection{Problem Definition}

The evaluation on the likelihood of $p_{sim}(\mathbf{x}|\bm{\theta})$ in a simulation is not allowed because a simulation is fundamentally a data-generation descriptive process. The purpose of \textit{likelihood-free inference} is estimating the posterior distribution $p_{sim}(\bm{\theta}|\mathbf{x}=\mathbf{x}_{o})$, where $\mathbf{x}_{o}$ is a one-shot real world observation, assuming that the real world with the same context happens only \textit{once}.

\subsection{Sequential Likelihood-Free Inference}

Recent approaches of \textit{likelihood-free inference} estimate the posterior with the iterative rounds of inference, where the iterative rounds gradually fasten the approximate posterior to the ground-truth posterior. This \textit{sequential} approach is becoming the mainstream in the community of \textit{likelihood-free inference} because the iterative optimization saves the simulation budgets by orders of magnitude \citep{cranmer2020frontier}.

\paragraph{Initial round} The sequential likelihood-free inference gathers simulation inputs from the prior distribution $p(\bm{\theta})$, the uniform distribution on the input space. A collection of simulation input-output pairs constructs a dataset for the initial round $\mathcal{D}_{1}=\{(\bm{\theta}_{1,j},\mathbf{x}_{1,j})\}_{j=1}^{N}$, where each $\mathbf{x}_{1,j}$ is the simulation output corresponding to $\bm{\theta}_{1,j}$, i.e., $\mathbf{x}_{1,j}\sim p_{sim}(\mathbf{x}\vert\bm{\theta}_{1,j})$. The approximate posterior on the initial round with $\mathcal{D}_{1}$ is trained by either one of the inference algorithms described in Section \ref{sec:inference}.

\paragraph{Next rounds} The new simulation inputs are drawn from a proposal distribution $\bm{\theta}_{r,j}\sim p_{r}(\bm{\theta})$. The proposal distribution is the approximate posterior at the last round $p_{r}(\bm{\theta}):=q_{r-1}(\bm{\theta}\vert\mathbf{x}_{o})$. The algorithm accumulates the newly simulated data into the training dataset as $\mathcal{D}_{r}\leftarrow\mathcal{D}_{r-1}\cup\{(\bm{\theta}_{r,j},\mathbf{x}_{r,j})\}_{j=1}^{N}$, where $\mathbf{x}_{r,j}\sim p(\mathbf{x}\vert\bm{\theta}_{r,j})$, and approximates the posterior.

\subsection{Inference Algorithms}\label{sec:inference}

\paragraph{Neural Likelihood} \citet{papamakarios2019sequential} estimates the likelihood $p(\mathbf{x}\vert\bm{\theta})$ with a (conditional) neural network $q_{\mathbf{\psi}}(\mathbf{x}\vert\bm{\theta})$ parametrized by $\mathbf{\psi}$. Given $\tilde{p}_{r}(\bm{\theta}):=\frac{1}{r}\sum_{s=1}^{r}p_{s}(\bm{\theta})$ is the cumulative input distribution, the optimization loss becomes
\begin{align*}
\mathcal{L}_{SNL}(\mathbf{\psi})&=-\mathbb{E}_{(\bm{\theta},\mathbf{x})\sim \tilde{p}_{r}(\bm{\theta})p_{sim}(\mathbf{x}\vert\bm{\theta})}\big[\log{q_{\mathbf{\psi}}(\mathbf{x}\vert\bm{\theta})}\big]\\
&=D_{KL}\big(\tilde{p}_{r}(\bm{\theta})p_{sim}(\mathbf{x}\vert\bm{\theta})\Vert\tilde{p}_{r}(\bm{\theta})q_{\mathbf{\psi}}(\mathbf{x}\vert\bm{\theta})\big)+C,
\end{align*}
where $C$ is a constant irrelevant to $\psi$. The optimal neural likelihood $q_{\psi^{*}}(\mathbf{x}\vert\bm{\theta})$ matches to the ground-truth likelihood $p_{sim}(\mathbf{x}\vert\bm{\theta})$ if the training dataset sufficiently covers the space of input parameters. The approximate posterior of this Sequential Neural Likelihood (SNL) is given as the unnormalized form by $q_{r}(\bm{\theta}\vert\mathbf{x}_{o})\propto p(\bm{\theta})q_{\mathbf{\psi}}(\mathbf{x}_{o}\vert\bm{\theta})$ with the trained $\mathbf{\psi}$.

\paragraph{Neural Posterior} \citet{greenberg2019automatic} directly estimates the posterior $p_{sim}(\bm{\theta}\vert\mathbf{x})$ with a neural network $q_{\mathbf{\phi}}(\bm{\theta}\vert\mathbf{x})$ parametrized by $\mathbf{\phi}$. The optimization loss of this Automatic Posterior Transformation (APT) is the (normalized) negative log-posterior $\mathcal{L}_{APT}(\mathbf{\phi})=-\mathbb{E}_{(\bm{\theta},\mathbf{x})\sim \tilde{p}_{r}(\bm{\theta})p_{sim}(\mathbf{x}\vert\bm{\theta})}\big[\log{\frac{q_{\mathbf{\phi}}(\bm{\theta}\vert\mathbf{x})}{Z_{\mathbf{\phi}}(\mathbf{x})}}\big]$, which is equivalent to the KL divergence $D_{KL}\big(\tilde{p}_{r}(\bm{\theta})p_{sim}(\mathbf{x}\vert\bm{\theta})\Vert \tilde{p}_{r}(\bm{\theta})\frac{q_{\mathbf{\phi}}(\bm{\theta}\vert\mathbf{x})}{p(\bm{\theta})}\frac{\tilde{p}_{r}(\mathbf{x})}{Z_{\mathbf{\phi}}(\mathbf{x})}\big)$. Here, $Z_{\mathbf{\phi}}(\mathbf{x})=\int q_{\mathbf{\phi}}(\bm{\theta}\vert\mathbf{x})\frac{\tilde{p}_{r}(\bm{\theta})}{p(\bm{\theta})}\diff\bm{\theta}$ is the normalizing constant and $\tilde{p}_{r}(\mathbf{x})=\mathbb{E}_{\tilde{p}_{r}(\bm{\theta})}[p_{sim}(\mathbf{x}\vert\bm{\theta})]$ is the outcome distribution expected by $\tilde{p}_{r}(\bm{\theta})$.

\subsection{Issues of Neural Likelihood}\label{sec:SNL_problem}

Despite the equivalence of the optimal neural likelihood to the ground-truth likelihood, SNL is under severe overfitting, as evidenced in Fig. \ref{fig:SNL}-(a). This overfitting prevents the accurate estimation of the neural likelihood, and the approximate posterior $q_{\psi}(\bm{\theta}\vert\mathbf{x}_{o})\propto p(\bm{\theta})q_{\mathbf{\psi}}(\mathbf{x}_{o}\vert\bm{\theta})$ becomes inaccurate in Fig. \ref{fig:SNL}-(b) and Fig. \ref{fig:SLCP-16_sample}-(b). This immatured inference at each round gives an inaccurate signal when we sample the next batch of the simulation run, and this feedback loop eventually leads the inference failure within a limited simulation budget. Furthermore, SNL is not scalable to the outcome dimension in Fig. \ref{fig:SNL}-(c).

\begin{figure*}[t]
\centering
    \subfigure[$p_{sim}(\bm{\theta}\vert\mathbf{x}_{o})$ (Ground-truth)]{\includegraphics[width=0.24\linewidth]{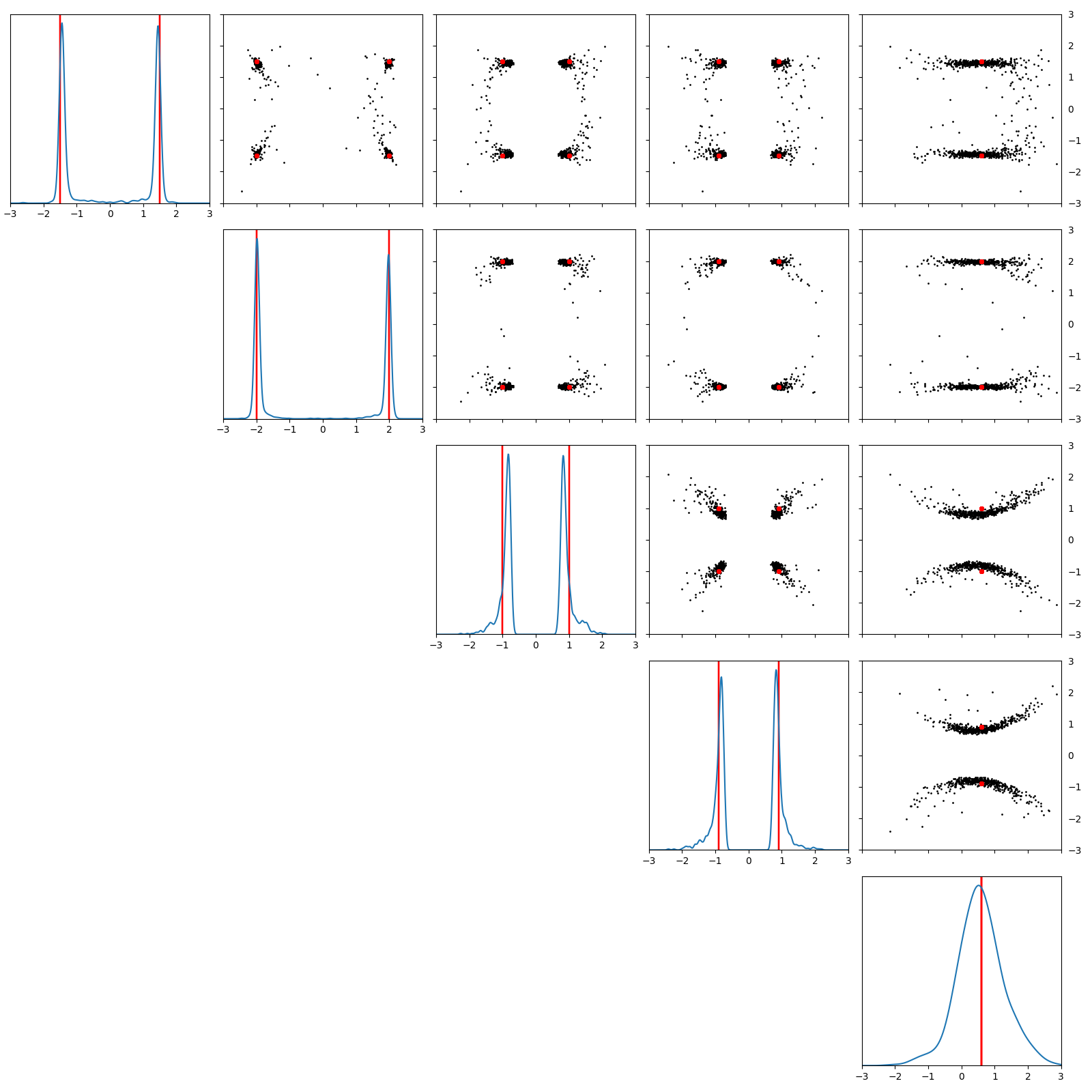}}
    \centering
    \subfigure[$\mathcal{L}_{SNL}(\psi)$]{\includegraphics[width=0.24\linewidth]{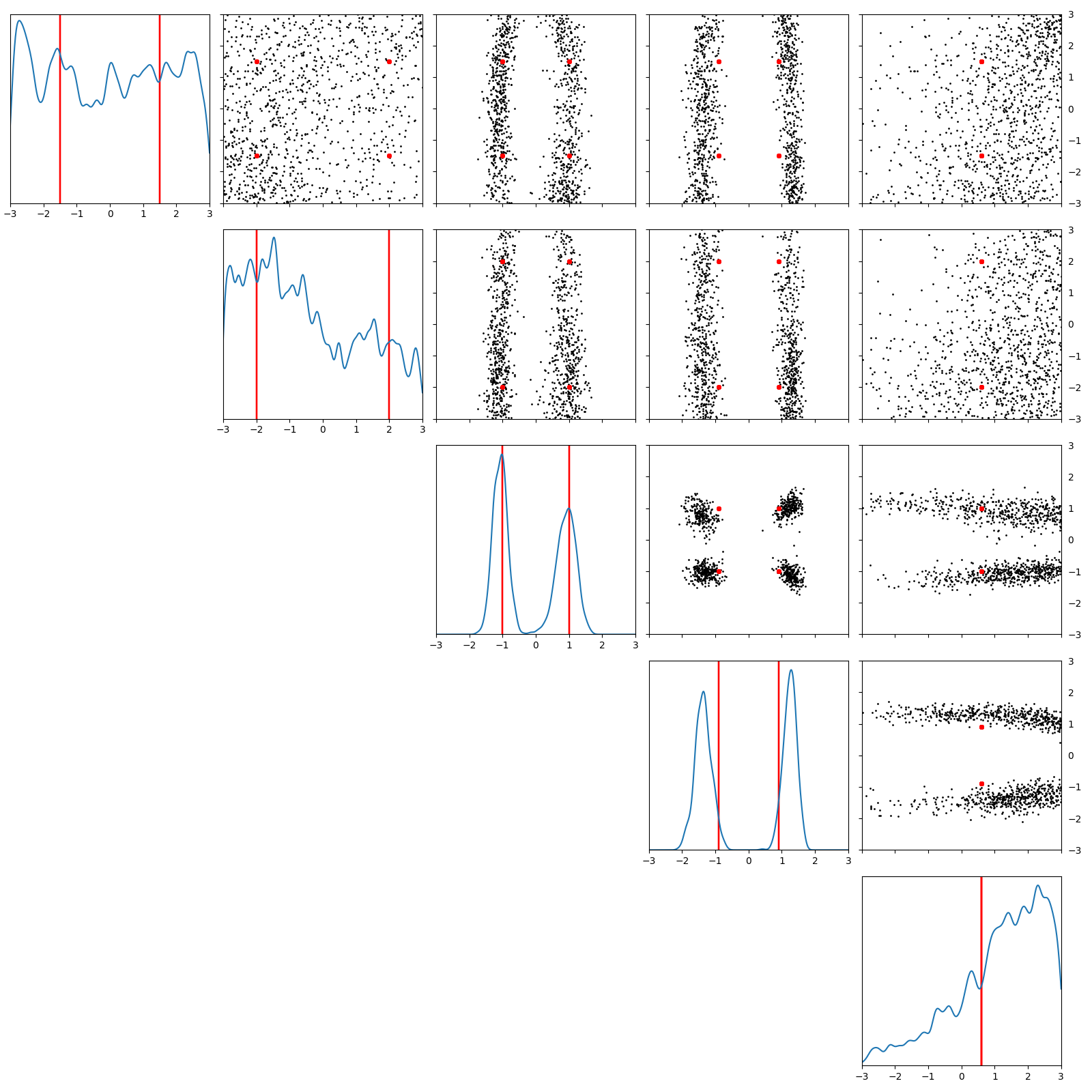}}
        \centering
    \subfigure[$\mathcal{L}_{SNL}(\psi)+F_{1}(\psi)$]{\includegraphics[width=0.24\linewidth]{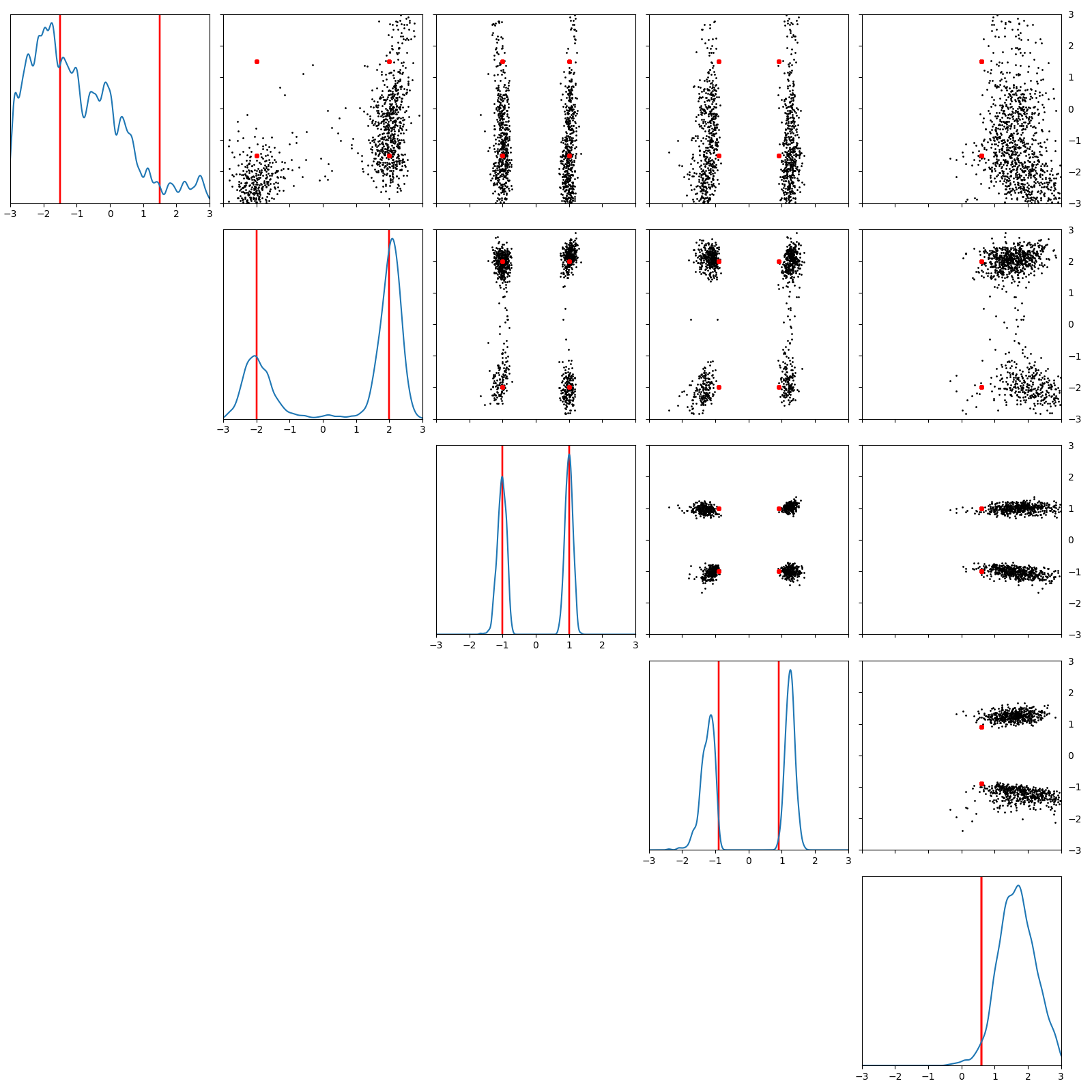}}
        \centering
    \subfigure[$\mathcal{L}_{SNL}(\psi)+\lambda (F_{1}(\psi)+F_{2}(\psi))$]{\includegraphics[width=0.24\linewidth]{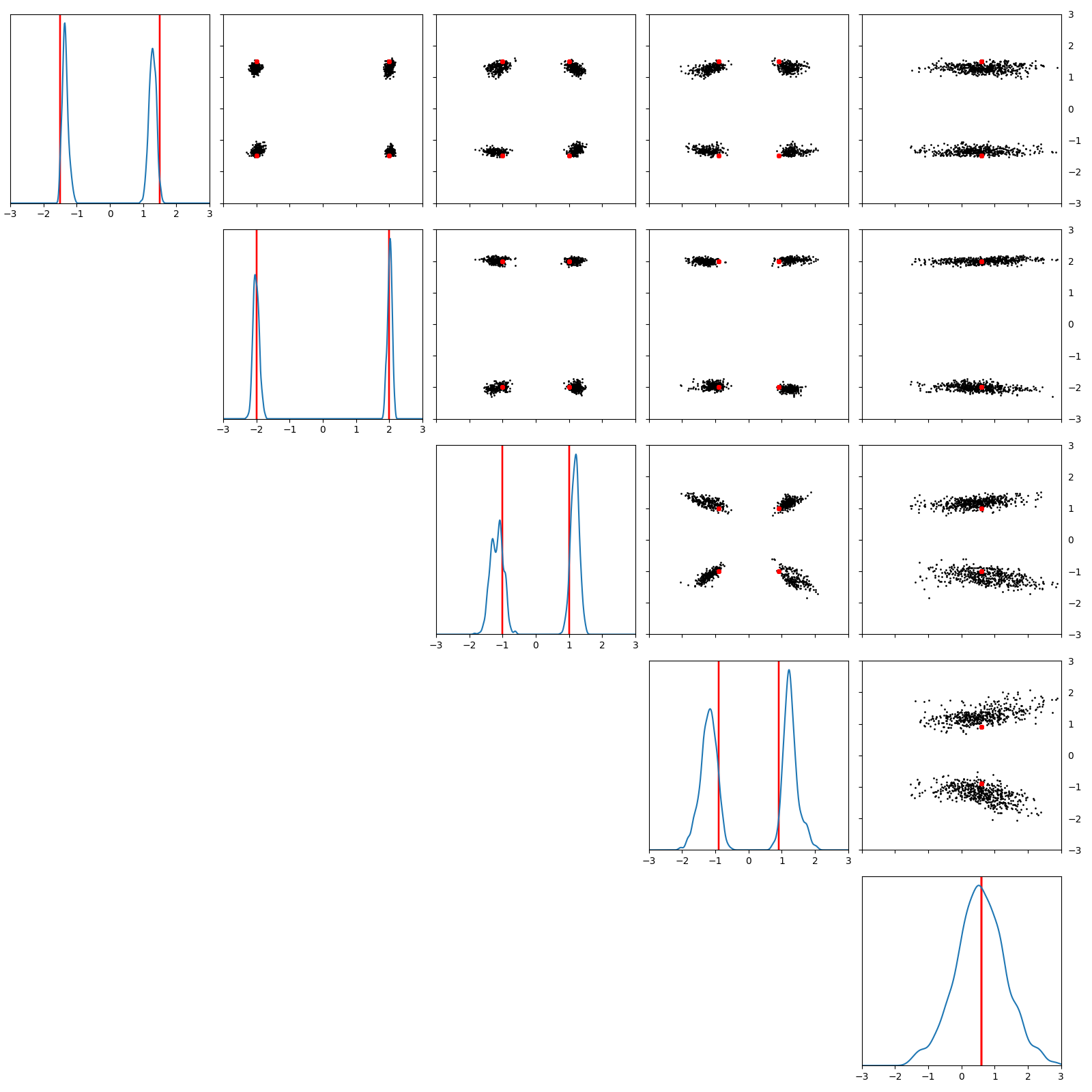}}
    \vskip -0.05in
    \caption{Comparison of (a) the ground-truth posterior $p_{sim}(\bm{\theta}\vert\mathbf{x}_{o})$ and the approximate posterior $q_{\psi}(\bm{\theta}\vert\mathbf{x}_{o})$ inferred by (b) SNL loss $\mathcal{L}_{SNL}(\psi)$, (c) SNL with the reverse KL regularization $\mathcal{L}_{SNL}(\psi)+\lambda F_{1}(\psi)$, and (d) SNL with our regularization $\mathcal{L}_{SNL}(\psi)+\lambda (F_{1}(\psi)+F_{2}(\psi))$ on SLCP-16 \cite{kim2020sequential}. The diagonal boxes represent marginal distributions for each parameter input, and the off-diagonal boxes represent the random samples from the posterior distribution as black dots, projected into the two-dimensional subspace for each pair of parameter inputs. The red dots and lines represent the ground-truth input parameters. We apply the adversarial method from \citet{nowozin2016f} to estimate $F_{1}$ in (c); we use the proposed $F(\psi,\phi)$ to estimate $F(\psi)$ in (d). We use the exponential decaying $\lambda$, see Section \ref{sec:regularization_coefficient} for the details.}
    \label{fig:SLCP-16_sample}
    \vskip -0.05in
\end{figure*}

\section{Regularization on Training the Neural Likelihood}\label{sec:NPR}

To resolve the aforementioned issues, we introduce a constrained problem
\begin{align}\label{eq:constrained_problem}
\begin{split}
&\text{minimize } \mathcal{L}_{SNL}(\mathbf{\psi}) \text{  subject to } F(\mathbf{\psi})\le C,
\end{split}
\end{align}
where $C$ is a constant that restricts the class of neural likelihood to $\mathcal{F}=\{q_{\mathbf{\psi}}:F(\mathbf{\psi})\le C\}$. The Lagrangian of the problem is
\begin{align*}
\mathcal{L}_{\lambda}(\mathbf{\psi})=\mathcal{L}_{SNL}(\mathbf{\psi})+\lambda F(\mathbf{\psi}),
\end{align*}
where $\lambda>0$ is the regularization magnitude. 

To determine a specific form of the constraint, recall that the optimization loss is the forward KL divergence between the true joint distribution $\tilde{p}_{r}(\bm{\theta})p_{sim}(\mathbf{x}\vert\bm{\theta})$ and the modeled joint distribution $\tilde{p}_{r}(\bm{\theta})q_{\mathbf{\psi}}(\mathbf{x}\vert\bm{\theta})$. Due to the mode-covering property \citep{zhang2019variational} of the forward KL divergence, the approximate posterior $q_{\psi}(\bm{\theta}\vert\mathbf{x}_{o})$ becomes inaccurate to the ground-truth posterior $p_{sim}(\bm{\theta}\vert\mathbf{x}_{o})$, as in Fig. \ref{fig:SLCP-16_sample}-(b) without penalizing the mode-covering. Specifically, we observe that $\bm{\theta}$ samples out of modes hardly contribute to the inference quality, so class $\mathcal{F}$ consisting of the mode concentrated distributions would have merits in the inference. In addition, the rich representation power would significantly mitigate the overfitting issue. Summing together, we design the constraint as
\begin{align*}
F(\mathbf{\psi})=F_{1}(\mathbf{\psi})+F_{2}(\mathbf{\psi}),
\end{align*}
where $F_{1}$ forces the posterior to exploit its modes and $F_{2}$ is for the better representation with the limited amount of data. 

As the reverse KL has the \textit{mode-seeking} property \citep{poole2016improved}, we define $F_{1}$ by
\begin{align*}
F_{1}(\mathbf{\psi})=D_{KL}\Big(\tilde{p}_{r}(\bm{\theta})q_{\mathbf{\psi}}(\mathbf{x}\vert\bm{\theta})\Big\Vert\tilde{p}_{r}(\bm{\theta})p_{sim}(\mathbf{x}\vert\bm{\theta})\Big).
\end{align*}
The mode-seeking property of the reverse KL strongly penalizes a dispersive distribution with non-zero values on the intermediate region between modes, whereas the forward KL $\mathcal{L}_{SNL}(\psi)$ prefers a mode-covering distribution. Therefore, the weighted loss $\mathcal{L}_{SNL}(\psi)+\lambda F_{1}(\psi)$ mixes two extremes in a unified optimization loss by taking both contrastive properties inherited from forward and reverse KLs. In other words, the weighted divergence searches distributions with accurate modes while retaining the mode diversity. The weight of $\lambda$ controls the trade-off between the exploration and exploitation effects. 

\begin{figure}[t]
\vskip -0.25in
\centering
    \includegraphics[width=.8\linewidth]{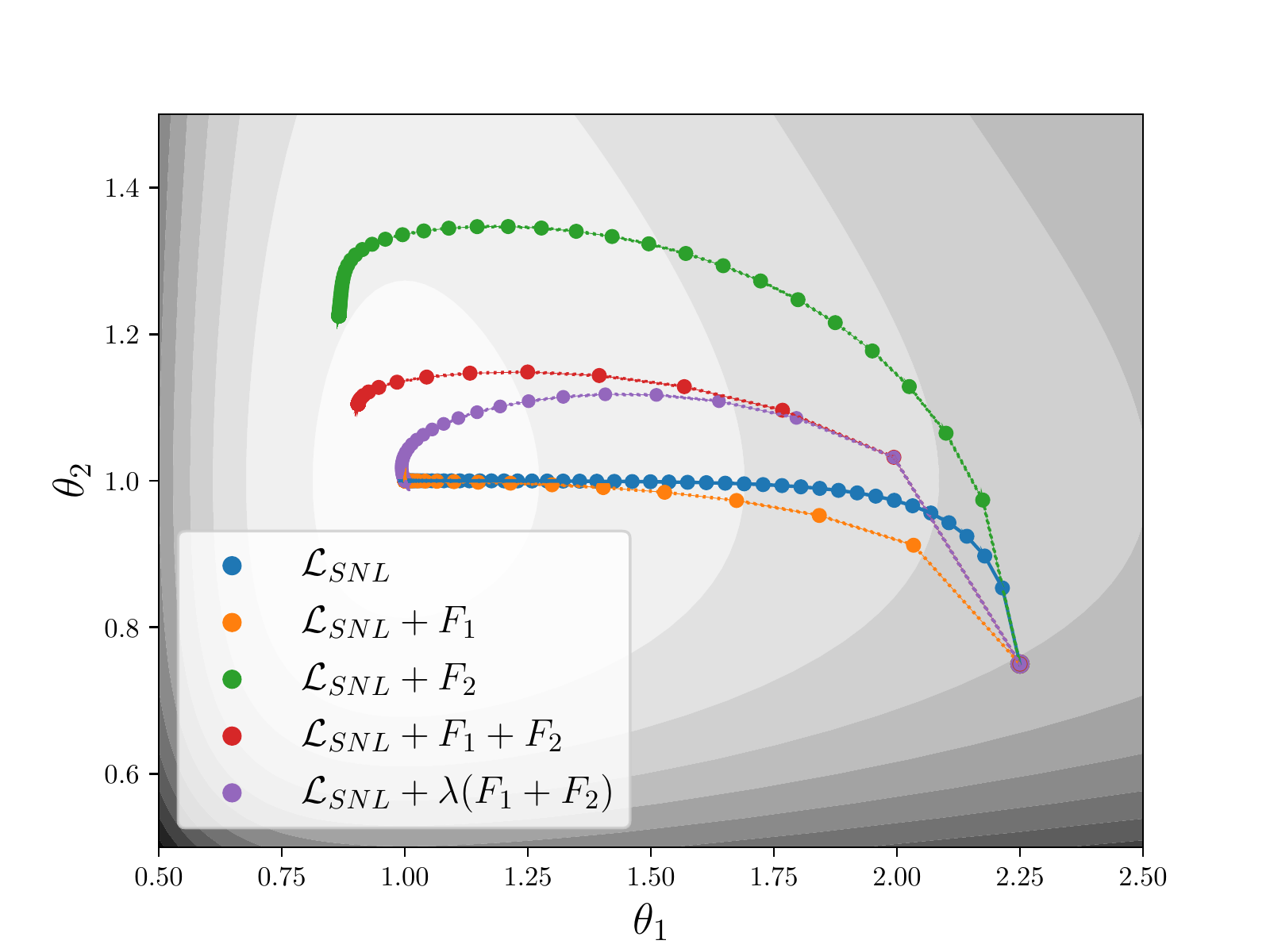}
    \vskip -0.05in
    \caption{Optimization trajectories for various loss candidates in a toy example. Either of $\mathcal{L}_{SNL}$ and $\mathcal{L}_{SNL}+F_{1}$ converges to the optimum, where $F_{1}$ (reverse KL) encourages the faster convergence. The regularization with $F_{2}$ blocks the learning curve to converge to the global optimum, but the curve of $\mathcal{L}_{SNL}+\lambda(F_{1}+F_{2})$ converges to the global optimum if we anneal $\lambda$ exponentially decaying.}
    \label{fig:contour_PRL}
    \vskip -0.05in
\end{figure}

On the other hand, we propose to utilize the mutual information in place of $F_{2}$ constraint by
\begin{align*}
F_{2}(\mathbf{\psi})&=-\mathbb{I}(\bm{\theta},\mathbf{x})=-D_{KL}\Big(\tilde{p}_{r}(\bm{\theta})q_{\mathbf{\psi}}(\mathbf{x}\vert\bm{\theta})\Big\Vert\tilde{p}_{r}(\bm{\theta})\tilde{q}_{\mathbf{\psi}}(\mathbf{x})\Big),
\end{align*}
where $\tilde{q}_{\psi}(\mathbf{x}):=\mathbb{E}_{\tilde{p}_{r}(\bm{\theta})}[q_{\psi}(\mathbf{x}\vert\bm{\theta})]$ is the expected neural likelihood. The maximum mutual information principle enforces the coupling of $\bm{\theta}$ and $\mathbf{x}$ with the neural likelihood $q_{\psi}(\mathbf{x}\vert\bm{\theta})$ to be informative. This maximum principle is particularly effective in resolving the mode collapse problem \citep{lee2020infomax} as well as capturing the rich representation on high dimensions \citep{bachman2019learning}.

We investigate the effect of our regularization in a toy case with a tractable likelihood of $p(x_{1},x_{2})=\mathcal{N}(x_{1};0,1)\mathcal{N}(x_{2};x_{1},1)$. If the model likelihood is $q(x_{1},x_{2})=\mathcal{N}(x_{1},0,\theta_{1}^{2})\mathcal{N}(x_{2};x_{1},\theta_{2}^{2})$, we can derive closed-forms of $\mathcal{L}_{SNL}, F_1$, and $F_2$:
\begin{align*}
\mathcal{L}_{SNL}&=\log{\theta_{1}}+\log{\theta_{2}}+\frac{\theta_{1}^{2}+\theta_{2}^{2}}{2\theta_{1}^{2}\theta_{2}^{2}},\\
F_{1}&=-\log{\theta_{1}}-\log{\theta_{2}}+\frac{\theta_{1}^{2}+\theta_{2}^{2}}{2}-1,\\
F_{2}&=\frac{1}{2}\log{\theta_{1}^{2}+\theta_{2}^{2}}-\log{\theta_{2}}.
\end{align*}
In this toy example, we optimize $\theta_{1}$ and $\theta_{2}$ with respect to the given losses, and mutual information seems to be redundant for the optimization in Fig. \ref{fig:contour_PRL}. However, it turns out that mutual information is key to the better convergence when $F_{1}$ and $F_{2}$ are intractable.

We show how $F_{2}$ affects the inference on SLCP-16 \cite{kim2020sequential} with intractable $F_{1}$ and $F_{2}$. As $F_{1}$ is intractable to calculate, we use adversarial training \cite{nowozin2016f} for the scalable estimation at the cost of training instability. Combined with the mutual information, however, we could avoid using adversarial training to estimate $F(\psi)$, and Fig. \ref{fig:SLCP-16_sample} shows that SNL with our regularization outperforms SNL regularized by $F_{1}$. Quantitatively, we measure the relative mutual information by 
\begin{align*}
\mathbb{I}_{rel}(r):=\frac{D_{KL}\big(\tilde{p}_{r}(\bm{\theta})q_{\mathbf{\psi}}(\mathbf{x}\vert\bm{\theta})\Vert\tilde{p}_{r}(\bm{\theta})\tilde{q}_{\mathbf{\psi}}(\mathbf{x})\big)}{D_{KL}\big(\tilde{p}_{r}(\bm{\theta})p_{sim}(\mathbf{x}\vert\bm{\theta})\Vert\tilde{p}_{r}(\bm{\theta})\tilde{p}_{sim}(\mathbf{x})\big)},
\end{align*}
where $\tilde{p}_{sim}(\mathbf{x})=\mathbb{E}_{\tilde{p}_{r}(\bm{\theta})}[p_{sim}(\mathbf{x}\vert\bm{\theta})]$. This relative mutual information satisfies $\mathbb{I}_{rel}(r)=1$ if and only if the neural likelihood $q_{\psi}(\mathbf{x}\vert\bm{\theta})$ exactly matches the ground-truth likelihood $p_{sim}(\mathbf{x}\vert\bm{\theta})$. In Fig. \ref{fig:MI}, we compare three loss candidates that converge to the global optimum of Fig. \ref{fig:contour_PRL}: $\mathcal{L}_{SNL}$, $\mathcal{L}_{SNL}+F_{1}$, and $\mathcal{L}_{SNL}+\lambda(F_{1}+F_{2})$. Fig. \ref{fig:MI} shows that $\mathcal{L}_{SNL}$ regularized by $F_{1}+F_{2}$ with proper choice of regularization coefficient $\lambda$ is most close to the dotted line of $q_{\psi}(\mathbf{x}\vert\bm{\theta})=p_{sim}(\mathbf{x}\vert\bm{\theta})$ among all. 

\section{Neural Posterior Regularization}\label{sec:Methodology}

\subsection{Unified Estimation of Reverse KL and Mutual Information}

While $F=F_{1}+F_{2}$ is designed to avoid the inefficient feedback loop mentioned in Section \ref{sec:SNL_problem}, the constraint $F$ cannot be tractably computed in general. Therefore, we introduce a method to approximate the constraint. To begin with, recall that the neural posterior loss is
\begin{align*}
\mathcal{L}_{APT}(\mathbf{\phi})=-\mathbb{E}_{\tilde{p}_{r}(\bm{\theta})p_{sim}(\mathbf{x}\vert\bm{\theta})}\bigg[\log{\frac{q_{\mathbf{\phi}}(\bm{\theta}\vert\mathbf{x})}{Z_{\mathbf{\phi}}(\mathbf{x})}}\bigg].
\end{align*}
By replacing the expected distribution of $\mathcal{L}_{APT}$ from $\tilde{p}_{r}(\bm{\theta})p_{sim}(\mathbf{x}\vert\bm{\theta})$ to $\tilde{p}_{r}(\bm{\theta})q_{\psi}(\mathbf{x}\vert\bm{\theta})$, the loss $\mathcal{L}_{APT}$ transforms to
\begin{align}\label{eq:NPR}
F(\mathbf{\psi},\mathbf{\phi})=-\mathbb{E}_{\tilde{p}_{r}(\bm{\theta})q_{\mathbf{\psi}}(\mathbf{x}\vert\bm{\theta})}\bigg[\log{\frac{q_{\mathbf{\phi}}(\bm{\theta}\vert\mathbf{x})}{Z_{\mathbf{\phi}}(\mathbf{x})}}\bigg].
\end{align}
When the neural posterior satisfies $q_{\mathbf{\phi}^{*}}(\bm{\theta}\vert\mathbf{x})=p_{sim}(\bm{\theta}\vert\mathbf{x})$ and $Z_{\mathbf{\phi}^{*}}(\mathbf{x})=\frac{\tilde{p}_{r}(\mathbf{x})}{p_{sim}(\mathbf{x})}$ for $p_{sim}(\mathbf{x})=\mathbb{E}_{p(\bm{\theta})}[p_{sim}(\mathbf{x}\vert\bm{\theta})]$, Eq. \ref{eq:NPR} reduces to
\begin{align*}
F(\mathbf{\psi},\mathbf{\phi}^{*})=-\mathbb{E}_{\tilde{p}_{r}(\bm{\theta})q_{\mathbf{\psi}}(\mathbf{x}\vert\bm{\theta})}\bigg[\log{\frac{p_{sim}(\bm{\theta}\vert\mathbf{x})p_{sim}(\mathbf{x})}{\tilde{p}_{r}(\mathbf{x})}}\bigg].
\end{align*}
Then, Theorem \ref{thm:1} proves that $F(\mathbf{\psi},\mathbf{\phi}^{*})$ is the proxy of the regularization of $F(\mathbf{\psi})$.
\begin{theorem}\label{thm:1}
    $F(\mathbf{\psi},\mathbf{\phi}^{*})$ is decomposed into $F(\mathbf{\psi},\mathbf{\phi}^{*})=F(\mathbf{\psi})-R(\mathbf{\psi})$, up to a constant, where $R(\mathbf{\psi})$ is the residual term given by $R(\mathbf{\psi})=D_{KL}\big(\tilde{q}_{\mathbf{\psi}}(\mathbf{x})\Vert\tilde{p}_{r}(\mathbf{x})\big)$.
\end{theorem}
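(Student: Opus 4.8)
The plan is to write both $F(\psi)$ and $F(\psi,\phi^{*})$ as expectations of log-ratios under the \emph{same} joint law $\tilde{p}_{r}(\bm{\theta})q_{\psi}(\mathbf{x}\vert\bm{\theta})$, take their difference, and show that everything cancels except $R(\psi)$ plus a term that does not involve $\psi$.

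First I would collapse $F(\psi)=F_{1}(\psi)+F_{2}(\psi)$ into a single expectation. Inside each of $F_{1}$ and $F_{2}$ the factor $\tilde{p}_{r}(\bm{\theta})$ cancels between numerator and denominator, so $F_{1}(\psi)=\mathbb{E}_{\tilde{p}_{r}(\bm{\theta})q_{\psi}(\mathbf{x}\vert\bm{\theta})}\big[\log\frac{q_{\psi}(\mathbf{x}\vert\bm{\theta})}{p_{sim}(\mathbf{x}\vert\bm{\theta})}\big]$ and $F_{2}(\psi)=-\mathbb{E}_{\tilde{p}_{r}(\bm{\theta})q_{\psi}(\mathbf{x}\vert\bm{\theta})}\big[\log\frac{q_{\psi}(\mathbf{x}\vert\bm{\theta})}{\tilde{q}_{\psi}(\mathbf{x})}\big]$; adding them the $\log q_{\psi}(\mathbf{x}\vert\bm{\theta})$ terms cancel and $F(\psi)=\mathbb{E}_{\tilde{p}_{r}(\bm{\theta})q_{\psi}(\mathbf{x}\vert\bm{\theta})}\big[\log\frac{\tilde{q}_{\psi}(\mathbf{x})}{p_{sim}(\mathbf{x}\vert\bm{\theta})}\big]$.

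Next I would rewrite the denominator with Bayes' rule \emph{with respect to the genuine prior}, $p_{sim}(\mathbf{x}\vert\bm{\theta})=\frac{p_{sim}(\bm{\theta}\vert\mathbf{x})\,p_{sim}(\mathbf{x})}{p(\bm{\theta})}$ with $p_{sim}(\mathbf{x})=\mathbb{E}_{p(\bm{\theta})}[p_{sim}(\mathbf{x}\vert\bm{\theta})]$ --- precisely the identity already used to obtain the displayed form of $F(\psi,\phi^{*})$ just above the theorem. Substituting and subtracting the two expressions, the terms carrying $p_{sim}(\bm{\theta}\vert\mathbf{x})$ and $p_{sim}(\mathbf{x})$ cancel identically, leaving
\[
F(\psi)-F(\psi,\phi^{*})=\mathbb{E}_{\tilde{p}_{r}(\bm{\theta})q_{\psi}(\mathbf{x}\vert\bm{\theta})}\Big[\log\frac{\tilde{q}_{\psi}(\mathbf{x})}{\tilde{p}_{r}(\mathbf{x})}\Big]+\mathbb{E}_{\tilde{p}_{r}(\bm{\theta})}\big[\log p(\bm{\theta})\big].
\]
The second summand is a constant independent of $\psi$. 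For the first, the integrand depends on $\mathbf{x}$ only, and the $\mathbf{x}$-marginal of $\tilde{p}_{r}(\bm{\theta})q_{\psi}(\mathbf{x}\vert\bm{\theta})$ is by definition $\tilde{q}_{\psi}(\mathbf{x})$, so it equals $\mathbb{E}_{\tilde{q}_{\psi}(\mathbf{x})}\big[\log\frac{\tilde{q}_{\psi}(\mathbf{x})}{\tilde{p}_{r}(\mathbf{x})}\big]=D_{KL}\big(\tilde{q}_{\psi}(\mathbf{x})\Vert\tilde{p}_{r}(\mathbf{x})\big)=R(\psi)$. Rearranging gives $F(\psi,\phi^{*})=F(\psi)-R(\psi)$ up to the constant, as claimed.

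The calculation is routine; the only point that needs care is keeping the two ``priors'' apart --- the true (uniform) prior $p(\bm{\theta})$ that enters Bayes' rule and the definition of $\phi^{*}$ through $Z_{\phi^{*}}(\mathbf{x})=\tilde{p}_{r}(\mathbf{x})/p_{sim}(\mathbf{x})$, versus the cumulative proposal $\tilde{p}_{r}(\bm{\theta})$ that governs the expectation and the marginal $\tilde{q}_{\psi}(\mathbf{x})$ --- together with checking that the leftover $\mathbb{E}_{\tilde{p}_{r}(\bm{\theta})}[\log p(\bm{\theta})]$ is genuinely $\psi$-free (it is constant since $p(\bm{\theta})$ is uniform and $\tilde{p}_{r}$ is fixed in the current round), so that it may be absorbed into the additive constant.
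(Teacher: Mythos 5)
Your proposal is correct and follows essentially the same route as the paper's proof: both rest on writing everything as expectations under $\tilde{p}_{r}(\bm{\theta})q_{\psi}(\mathbf{x}\vert\bm{\theta})$, applying Bayes' rule $p_{sim}(\bm{\theta}\vert\mathbf{x})p_{sim}(\mathbf{x})=p(\bm{\theta})p_{sim}(\mathbf{x}\vert\bm{\theta})$, and letting the log-ratios telescope so that only $F_{1}+F_{2}$, the marginal KL term $R(\psi)$, and the $\psi$-free term $\mathbb{E}_{\tilde{p}_{r}(\bm{\theta})}[\log p(\bm{\theta})]$ survive. The only cosmetic difference is that you first collapse $F_{1}+F_{2}$ into a single expectation and then subtract, whereas the paper expands $F(\psi,\phi^{*})$ directly into the four constituent integrals.
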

\begin{proof}
	We have
	\begin{eqnarray*}
	\lefteqn{F(\psi,\phi^{*})=-\mathbb{E}_{\tilde{p}_{r}(\bm{\theta})q_{\psi}(\mathbf{x}\vert\bm{\theta})}\bigg[\log{\frac{p_{sim}(\bm{\theta}\vert\mathbf{x})p_{sim}(\mathbf{x})}{\tilde{p}_{r}(\mathbf{x})}}\bigg]}&\\
	&&=\int \int \tilde{p}_{r}(\bm{\theta})q_{\psi}(\mathbf{x}\vert\bm{\theta})\log{\frac{p(\bm{\theta})q_{\psi}(\mathbf{x}\vert\bm{\theta})}{p_{sim}(\bm{\theta}\vert\mathbf{x})p_{sim}(\mathbf{x})}}\diff\bm{\theta}\diff\mathbf{x}\\
	&&\quad-\int\int \tilde{p}_{r}(\bm{\theta})q_{\psi}(\mathbf{x}\vert\bm{\theta})\log{\frac{\tilde{p}_{r}(\bm{\theta})q_{\psi}(\mathbf{x}\vert\bm{\theta})}{\tilde{p}_{r}(\bm{\theta})\tilde{q}_{\psi}(\mathbf{x})}}\diff\bm{\theta}\diff\mathbf{x}\\
	&&\quad-\int\int \tilde{p}_{r}(\bm{\theta})q_{\psi}(\mathbf{x}\vert\bm{\theta})\log{\frac{\tilde{q}_{\psi}(\mathbf{x})}{\tilde{p}_{r}(\mathbf{x})}}\diff\bm{\theta}\diff\mathbf{x}\\
	&&\quad-\int\int \tilde{p}_{r}(\bm{\theta})q_{\psi}(\mathbf{x}\vert\bm{\theta})\log{p(\bm{\theta})}\diff\bm{\theta}\diff\mathbf{x}\\
	&&=D_{KL}\big(\tilde{p}_{r}(\bm{\theta})q_{\psi}(\mathbf{x}\vert\bm{\theta})\Vert \tilde{p}_{r}(\bm{\theta})p_{sim}(\mathbf{x}\vert\bm{\theta})\big)\\
	&&\quad-D_{KL}\Big(\tilde{p}_{r}(\bm{\theta})q_{\mathbf{\psi}}(\mathbf{x}\vert\bm{\theta})\Big\Vert\tilde{p}_{r}(\bm{\theta})\tilde{q}_{\mathbf{\psi}}(\mathbf{x})\Big)\\
	&&\quad-D_{KL}\big(\tilde{q}_{\psi}(\mathbf{x})\Vert\tilde{p}_{r}(\mathbf{x})\big)+\int\tilde{p}_{sim}(\bm{\theta})\log{p(\bm{\theta})}\diff\bm{\theta}\\
	&&=F(\psi)-R(\psi)+C,
	\end{eqnarray*}
	where $C$ is irrelevant to $\psi$.
\end{proof}

\begin{figure}[t]
\vskip -0.05in
\centering
    \includegraphics[width=0.75\linewidth]{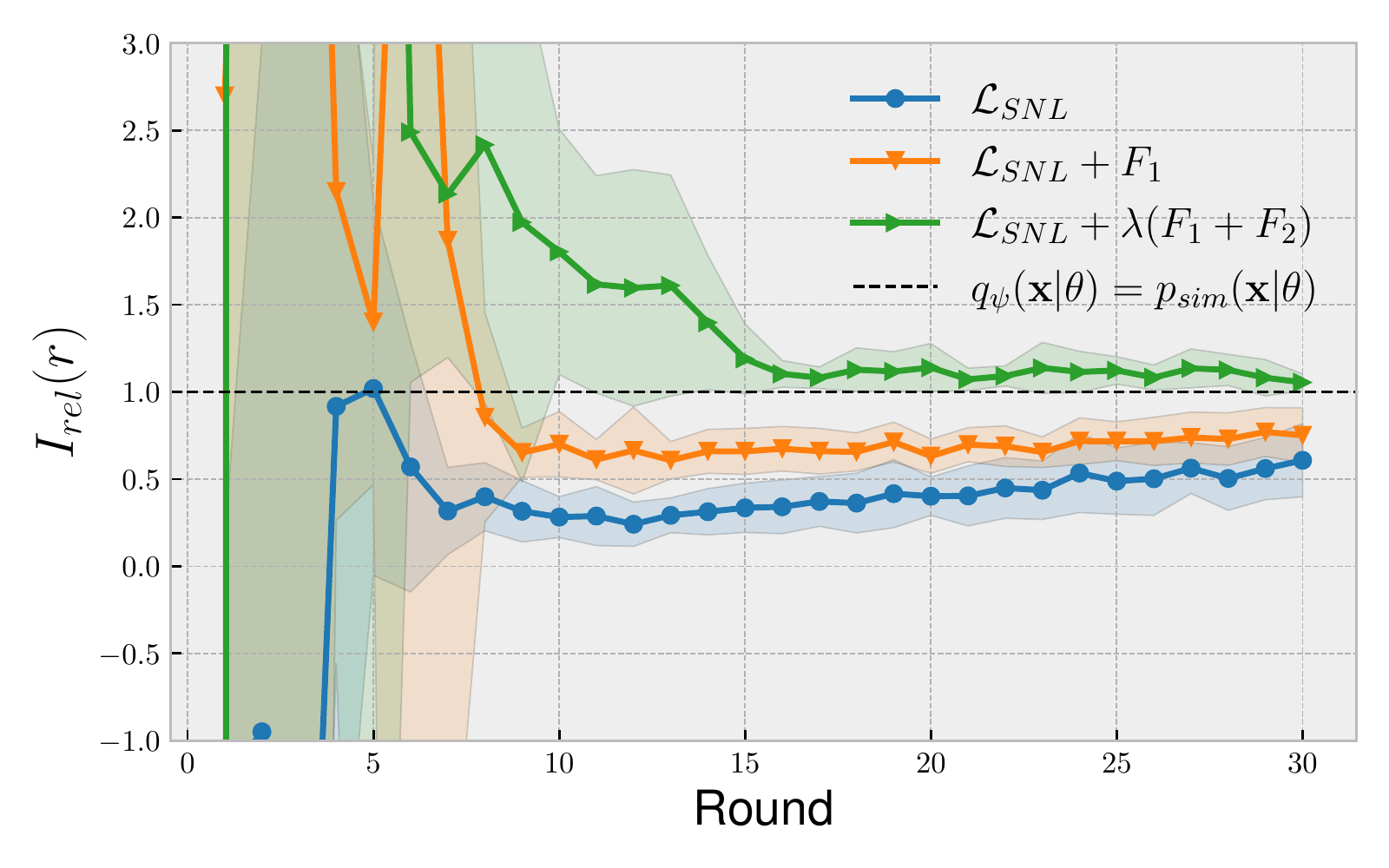}
    \vskip -0.05in
    \caption{Relative mutual information by inference rounds.}
    \label{fig:MI}
    \vskip -0.05in
\end{figure}

\begin{table*}[t]
    \caption{Comparison of performance on SLCP-16 with 100-dimensional and SLCP-256 with 80-dimensional output. The NLTP of SLCP-256 is scaled by $10^{-2}$.}
    \label{tab:performance}
    \scriptsize
    \centering
    \resizebox{.72\linewidth}{!}{
        \begin{tabular}{crrrrrr}
            \toprule
            \multirow{2}{*}{Algorithm} & \multicolumn{3}{c}{SLCP-16} & \multicolumn{3}{c}{SLCP-256}\\
            \cmidrule(lr){2-4}
            \cmidrule(lr){5-7}
            & \multicolumn{1}{c}{NLTP ($\downarrow$)} & \multicolumn{1}{c}{$\log{\text{MMD}}$ ($\downarrow$)} & \multicolumn{1}{c}{IS ($\uparrow$)} & \multicolumn{1}{c}{NLTP ($\downarrow$)} & \multicolumn{1}{c}{$\log{\text{MMD ($\downarrow$)}}$} & \multicolumn{1}{c}{IS ($\downarrow$)} \\\midrule
            SMC-ABC & 8.21$\pm$2.36 & -1.80$\pm$1.24 & 13.56$\pm$0.55 & 75.44$\pm$29.71 & -4.48$\pm$0.55 & 81.69$\pm$7.73\\
            SNPE-A & 1.46$\pm$1.72 & -4.47$\pm$0.05 & 2.66$\pm$0.23 & 39.08$\pm$1.72 & -4.55$\pm$0.21 & 34.74$\pm$3.68\\
            SNPE-B & 27.66$\pm$23.83 & -2.12$\pm$0.31 & 1.50$\pm$0.40 & 140.4$\pm$22.78 & -1.83$\pm$0.13 & 19.43$\pm$0.93\\
            APT (SNPE-C) & 3.14$\pm$8.80 & -3.27$\pm$0.71 & 7.39$\pm$4.48 & 69.04$\pm$93.53 & -3.51$\pm$1.24 & 118.88$\pm$51.30\\
            AALR & 0.89$\pm$0.12 & -3.57$\pm$0.43 & 11.22$\pm$3.37 & 16.19$\pm$1.04 & \textbf{-6.81}$\pm$0.68 & 208.91$\pm$4.33\\
            SNL & 4.77$\pm$2.68 & -2.53$\pm$0.54 & 5.34$\pm$3.43 & 40.40$\pm$11.84 & -5.32$\pm$0.65 & 153.43$\pm$18.40\\\midrule
            SNL+NPR & \textbf{0.55}$\pm$0.79 & \textbf{-5.39}$\pm$0.94 & \textbf{14.95}$\pm$1.08 & \textbf{15.13}$\pm$2.65 & -6.51$\pm$0.86 & \textbf{211.85}$\pm$4.07\\
            \bottomrule
        \end{tabular}
    }
\end{table*}

The parameter $\mathbf{\phi}$ estimates the regularization by training the neural posterior, and $\mathbf{\psi}$ estimates the ground-truth likelihood by training the neural likelihood. We call $F(\mathbf{\psi},\mathbf{\phi})$ the Neural Posterior Regularization (NPR) since $F(\psi,\phi)$ is computed based on the neural posterior evaluation. Hence, we highlight that our regularization is indeed a unified framework of SNL and APT, and it enjoys the benefits of APT and SNL.

Altogether, we introduce the regularized loss function as
\begin{align}\label{eq:regularized_loss}
\mathcal{L}_{\lambda}(\mathbf{\psi},\mathbf{\phi})=\mathcal{L}_{SNL}(\mathbf{\psi})+\lambda F(\mathbf{\psi}, \mathbf{\phi}).
\end{align}
This regularized loss approximates the constrained problem of Eq. \ref{eq:constrained_problem} at the expense of additional neural parameter usage ($\phi$). However, it is worth noting that the main interest of \textit{likelihood-free inference} is minimizing the simulation budget rather than reducing the number of neural parameters.

\begin{table*}[t]
    \caption{Comparison of NLTP for the M/G/1 model, the Ricker model, and the Poisson model. Though NLTP on these simulations does not take the structure of the approximate posterior into account, we report this table to comply the previous researches \citep{papamakarios2019sequential}.}
    \vskip -0.1in
    \label{tab:performance_real}
    \scriptsize
    \begin{center}
        \resizebox{.95\linewidth}{!}{
            \begin{tabular}{crrrrrrrrr}
                \toprule
                \multirow{3}{*}{Algorithm} & \multicolumn{3}{c}{M/G/1} & \multicolumn{3}{c}{Ricker} & \multicolumn{3}{c}{Poisson}\\
                \cmidrule(lr){2-4}
                \cmidrule(lr){5-7}
                \cmidrule(lr){8-10}
                &\multicolumn{3}{c}{Output Dimension}&\multicolumn{3}{c}{Output Dimension} & \multicolumn{3}{c}{Output Dimension}\\
                & \multicolumn{1}{c}{5} & \multicolumn{1}{c}{20} & \multicolumn{1}{c}{100} & \multicolumn{1}{c}{13} & \multicolumn{1}{c}{20} & \multicolumn{1}{c}{100} & \multicolumn{1}{c}{25} & \multicolumn{1}{c}{49} & \multicolumn{1}{c}{361}\\
                \midrule
                SMC-ABC & 22.54$\pm$17.83 & 25.48$\pm$22.12 & 14.85$\pm$18.41 & 6.77$\pm$7.53 & 6.02$\pm$7.61 & 36.12$\pm$18.64 & 0.92$\pm$1.42 & 0.58$\pm$1.75 & 0.00$\pm$0.00\\
                SNPE-A & 4.36$\pm$0.43 & 4.07$\pm$0.49 & 3.25$\pm$0.61 & 4.43$\pm$0.20 & 4.54$\pm$0.49 & 4.43$\pm$0.18 & 0.34$\pm$0.48 & 0.17$\pm$0.42 & -3.22$\pm$6.32\\
                SNPE-B & 9.30$\pm$0.96 & 9.85$\pm$0.03 & 9.89$\pm$0.01 & 9.79$\pm$0.06 & 8.67$\pm$1.16 & 9.96$\pm$0.25 & 5.44$\pm$5.59 & 5.59$\pm$4.92 & 3.47$\pm$4.67\\
                APT (SNPE-C) & -3.49$\pm$0.97 & -2.27$\pm$7.14 & -3.01$\pm$1.82 & -0.62$\pm$0.98 & -0.24$\pm$3.86 & 4.13$\pm$6.16 & -11.61$\pm$1.93 & -11.61$\pm$2.50 & \textbf{-11.26}$\pm$0.96 \\
                AALR & -1.18$\pm$0.59 & -1.69$\pm$0.51 & 1.14$\pm$2.08 & 4.10$\pm$0.51 & 1.82$\pm$0.77 & 3.15$\pm$2.55 & -6.36$\pm$1.66 & -6.25$\pm$0.92 & 0.28$\pm$0.66 \\                             
                SNL & -3.23$\pm$1.25 & -5.58$\pm$1.37 & -4.65$\pm$1.30 & -1.46$\pm$0.66 & -0.41$\pm$1.07 & 1.44$\pm$6.42 & -12.76$\pm$0.99 & -12.48$\pm$0.95 & -2.36$\pm$7.44 \\\midrule
                SNL+NPR & \textbf{-3.86}$\pm$0.68 & \textbf{-6.36}$\pm$0.78 & \textbf{-5.52}$\pm$0.83 & \textbf{-1.78}$\pm$0.89 & \textbf{-1.14}$\pm$1.25 & \textbf{-0.48}$\pm$0.70 & \textbf{-13.31}$\pm$1.11 & \textbf{-13.61}$\pm$0.83 & -5.33$\pm$4.33 \\
                \bottomrule
            \end{tabular}
        }
    \end{center}
    \vskip -0.1in
\end{table*}

\subsection{Optimality Analysis of NPR}

The optimal neural likelihood $q_{\psi_{\lambda}^{*}}(\mathbf{x}\vert\bm{\theta})$ of $\mathcal{L}_{\lambda}(\psi,\phi)$ could deviate too far from $p_{sim}(\mathbf{x}\vert\bm{\theta})$. Therefore, we analyze the optimal point of the regularized loss $\mathcal{L}_{\lambda}(\psi,\phi)$ in Theorem \ref{thm:2}.
\begin{theorem}\label{thm:2}
    Suppose $p_{sim}(\mathbf{x}\vert\bm{\theta})$ and $p(\bm{\theta})$ are bounded on a positive interval, $\tilde{p}_{r}(\bm{\theta})$ is bounded below, and $q_{\mathbf{\psi}}(\mathbf{x}\vert\bm{\theta})$ is uniformly upper bounded on $\mathbf{\psi}$. Then $\psi_{\lambda}^{*}=\argmin_{\psi}\mathcal{L}_{\lambda}(\mathbf{\psi},\mathbf{\phi})$ satisfies
    \begin{align*}
    q_{\psi_{\lambda}^{*}}(\mathbf{x}\vert\bm{\theta})=\frac{p_{sim}(\mathbf{x}\vert\bm{\theta})}{c(\bm{\theta})-\lambda\log{\frac{q_{\mathbf{\phi}}(\bm{\theta}\vert\mathbf{x})}{Z_{\mathbf{\phi}}(\mathbf{x})}}},
    \end{align*}
    where $c(\bm{\theta})$ is a function of $\bm{\theta}$ that makes $q_{\mathbf{\psi}^{*}}^{\lambda}(\mathbf{x}\vert\bm{\theta})$ a distribution.
\end{theorem}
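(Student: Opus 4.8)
The plan is to treat this as a constrained calculus-of-variations problem: minimize the functional $\psi \mapsto \mathcal{L}_\lambda(\psi,\phi)$ over the space of conditional densities $q_\psi(\mathbf{x}\vert\bm{\theta})$, subject to the normalization constraint $\int q_\psi(\mathbf{x}\vert\bm{\theta})\diff\mathbf{x} = 1$ for each fixed $\bm{\theta}$. First I would write out the objective explicitly in terms of $q := q_\psi(\mathbf{x}\vert\bm{\theta})$. Unfolding the definitions, $\mathcal{L}_{SNL}(\psi) = -\int\int \tilde{p}_r(\bm{\theta}) p_{sim}(\mathbf{x}\vert\bm{\theta})\log q \diff\mathbf{x}\diff\bm{\theta} + C$, and by Eq.~\ref{eq:NPR} with $\phi$ fixed, $F(\psi,\phi) = -\int\int \tilde{p}_r(\bm{\theta}) q \log\frac{q_\phi(\bm{\theta}\vert\mathbf{x})}{Z_\phi(\mathbf{x})}\diff\mathbf{x}\diff\bm{\theta}$. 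Thus, up to the constant, the integrand (before imposing normalization) is $\tilde{p}_r(\bm{\theta})\big[-p_{sim}(\mathbf{x}\vert\bm{\theta})\log q - \lambda\, q\log\frac{q_\phi(\bm{\theta}\vert\mathbf{x})}{Z_\phi(\mathbf{x})}\big]$.

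Next I would introduce a Lagrange multiplier $c(\bm{\theta})$ (a function of $\bm{\theta}$ only, since the constraint is per-$\bm{\theta}$) for the normalization, forming the Lagrangian integrand $\tilde{p}_r(\bm{\theta})\big[-p_{sim}(\mathbf{x}\vert\bm{\theta})\log q - \lambda q\log\frac{q_\phi(\bm{\theta}\vert\mathbf{x})}{Z_\phi(\mathbf{x})} + c(\bm{\theta})\, q\big]$, and take the pointwise (Euler–Lagrange / Gateaux) derivative with respect to $q$ at each $(\bm{\theta},\mathbf{x})$. Setting $\partial/\partial q = 0$ gives $-\frac{p_{sim}(\mathbf{x}\vert\bm{\theta})}{q} - \lambda\log\frac{q_\phi(\bm{\theta}\vert\mathbf{x})}{Z_\phi(\mathbf{x})} + c(\bm{\theta}) = 0$, which rearranges immediately to $q = \frac{p_{sim}(\mathbf{x}\vert\bm{\theta})}{c(\bm{\theta}) - \lambda\log\frac{q_\phi(\bm{\theta}\vert\mathbf{x})}{Z_\phi(\mathbf{x})}}$ — exactly the claimed form, with $c(\bm{\theta})$ determined by the requirement that the right-hand side integrate to $1$ in $\mathbf{x}$. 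I would also check second-order behavior: the term $-p_{sim}\log q$ contributes $p_{sim}/q^2 > 0$ to the second derivative and the $F$ term is linear in $q$, so the objective is convex in $q$ pointwise, confirming the stationary point is the global minimizer.

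The main obstacle is not the formal variational computation but justifying that the minimizer is attained within the admissible class and that the formula yields a genuine probability density — this is where the technical hypotheses enter. I would use the boundedness of $p_{sim}(\mathbf{x}\vert\bm{\theta})$ and $p(\bm{\theta})$ on a positive interval, the lower bound on $\tilde{p}_r(\bm{\theta})$, and the uniform upper bound on $q_\psi(\mathbf{x}\vert\bm{\theta})$ to ensure (i) the denominator $c(\bm{\theta}) - \lambda\log\frac{q_\phi(\bm{\theta}\vert\mathbf{x})}{Z_\phi(\mathbf{x})}$ stays strictly positive and finite so that $q_{\psi_\lambda^*}$ is well-defined and normalizable, and (ii) the functional is bounded below and lower semicontinuous so a minimizer exists. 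A careful treatment would note that $\log\frac{q_\phi}{Z_\phi}$ must be controlled (it appears linearly against $q$), and that $c(\bm{\theta})$ exists and is unique because the map $c \mapsto \int \frac{p_{sim}(\mathbf{x}\vert\bm{\theta})}{c - \lambda\log(q_\phi/Z_\phi)}\diff\mathbf{x}$ is continuous and strictly decreasing in $c$ on the range where the denominator is positive, ranging over $(0,\infty)$. With these points established, the Euler–Lagrange identity pins down $q_{\psi_\lambda^*}$ uniquely and the proof is complete.
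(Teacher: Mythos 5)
Your proposal is correct and follows essentially the same route as the paper: a first-order variational (Euler--Lagrange) argument for the stationarity condition $\frac{p_{sim}(\mathbf{x}\vert\bm{\theta})}{q(\mathbf{x}\vert\bm{\theta})}+\lambda\log\frac{q_{\phi}(\bm{\theta}\vert\mathbf{x})}{Z_{\phi}(\mathbf{x})}=c(\bm{\theta})$, with your explicit Lagrange multiplier $c(\bm{\theta})$ playing the same role as the paper's restriction to zero-mean perturbations $\int u(\bm{\theta},\mathbf{x})\diff\mathbf{x}=0$. Your added convexity check and the monotonicity argument for the existence of $c(\bm{\theta})$ are sound refinements of what the paper leaves implicit.
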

\begin{proof}
Suppose $u(\bm{\theta},\mathbf{x})$ is a function of $(\bm{\theta},\mathbf{x})$ that satisfies $\int u(\bm{\theta},\mathbf{x})dx=0$ for any $\bm{\theta}$, then the function $(q+\epsilon r)(\mathbf{x}\vert\bm{\theta}):= q(\mathbf{x}\vert\bm{\theta})+\epsilon u(\bm{\theta},\mathbf{x})$ is a distribution on $\mathbf{x}$ for any $\bm{\theta}$. With the abuse of notation, the difference $\mathcal{L}_{\lambda}(q_{\psi}+\epsilon u,q_{\phi})-\mathcal{L}_{\lambda}(q_{\psi})$ becomes
	\begin{align*}
		&\mathcal{L}_{\lambda}(q_{\psi}+\epsilon u,q_{\phi})-\mathcal{L}_{\lambda}(q_{\psi},q_{\phi})\numberthis\label{eq:frechet}\\
		&=-\epsilon\int\int\tilde{p}_{r}(\bm{\theta})u(\bm{\theta},\mathbf{x})\bigg(\frac{p(\mathbf{x}\vert\bm{\theta})}{q_{\psi}(\mathbf{x}\vert\bm{\theta})}+\lambda\log{\frac{q_{\phi}(\bm{\theta}\vert\mathbf{x})}{Z_{\phi}(\mathbf{x})}}\bigg)\diff\bm{\theta}\diff\mathbf{x}+o(\epsilon).
	\end{align*}
	Therefore, the optimal solution of $\argmin_{\psi}\mathcal{L}_{\lambda}(\psi,\phi)$ is where Eq. \ref{eq:frechet} becomes zero for all $u(\bm{\theta},\mathbf{x})$ with $\int u(\bm{\theta},\mathbf{x})\diff\mathbf{x}=0$, $\forall\bm{\theta}\in\bm{\theta}$. By the canonical calculus using the Minkowski and H\"{o}lder's inequalities, we derive that
	\begin{align*}
	\frac{p_{sim}(\mathbf{x}\vert\bm{\theta})}{q_{\psi_{\lambda}^{*}}(\mathbf{x}\vert\bm{\theta})}+\lambda\log{\frac{q_{\phi}(\bm{\theta}\vert\mathbf{x})}{Z_{\phi}(\mathbf{x})}}=c(\bm{\theta}),
	\end{align*}
	from Eq. \ref{eq:frechet} for some function $c(\bm{\theta})$ that makes $q_{\psi_{\lambda}^{*}}(\mathbf{x}\vert\bm{\theta})$ a distribution. Also, the canonical analysis proves the uniqueness of the optimal distribution, which completes the proof.
	\end{proof}

\section{Study on Regularization Coefficient}\label{sec:regularization_coefficient}

Searching for the optimal $\lambda$ would be highly impractical if the simulation budget is strictly limited. Hence, the optimal strategy of $\lambda$ is required a-priori for \textit{likelihood-free inference}. Fig. \ref{fig:hyperparameter} illustrates the inference quality by round on SLCP-16/256 \citep{kim2020sequential} with 16/256 modes, respectively. We measure inference quality by the Inception Score (IS) \citep{salimans2016improved}, which counts the diversity of the approximate posterior. A higher IS indicates a better inference.

\begin{figure}[t]
\vskip -0.1in
    \centering
    \subfigure[SLCP-16]{\includegraphics[width=.49\linewidth]{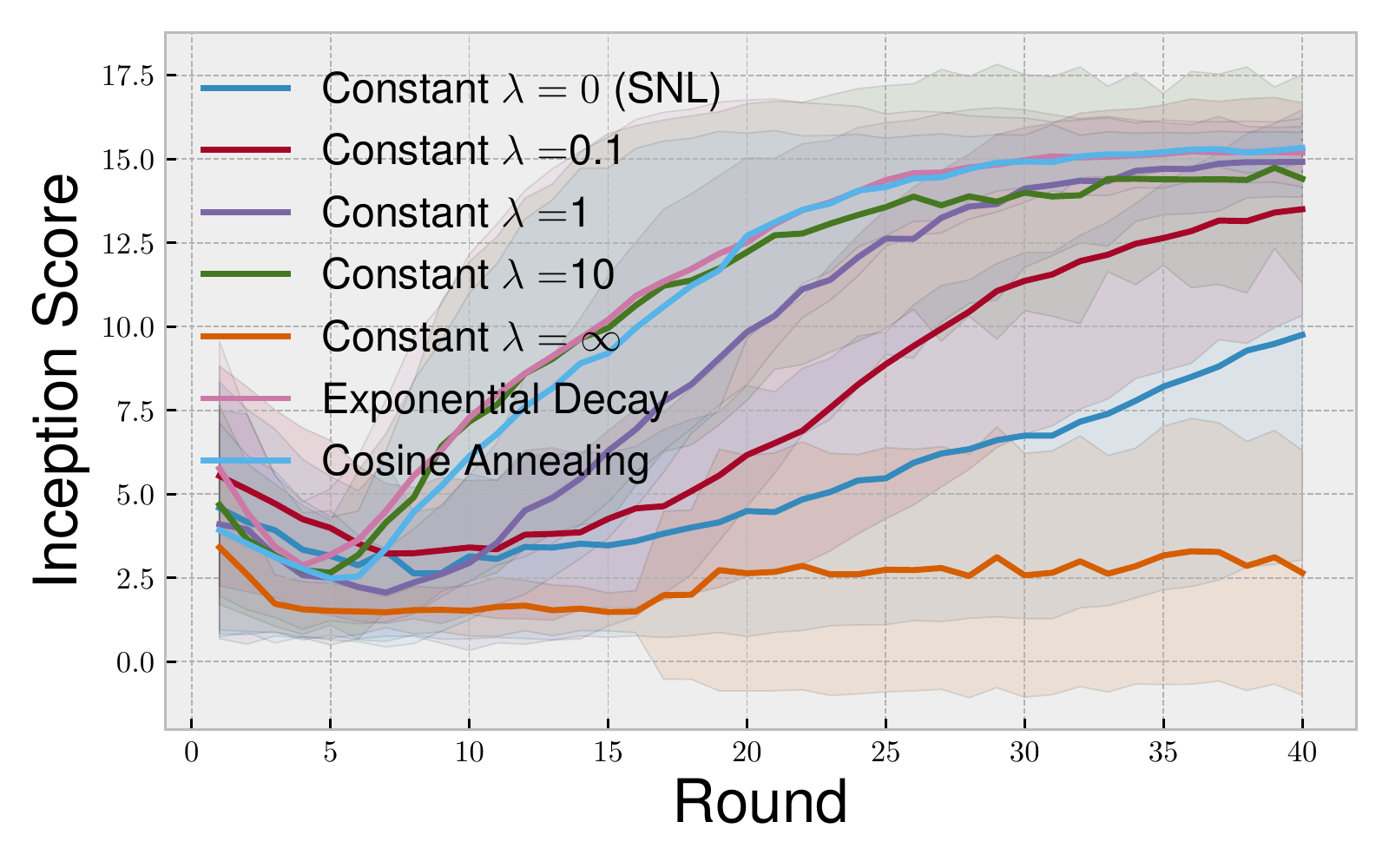}}
    \centering
    \subfigure[SLCP-256]{\includegraphics[width=.49\linewidth]{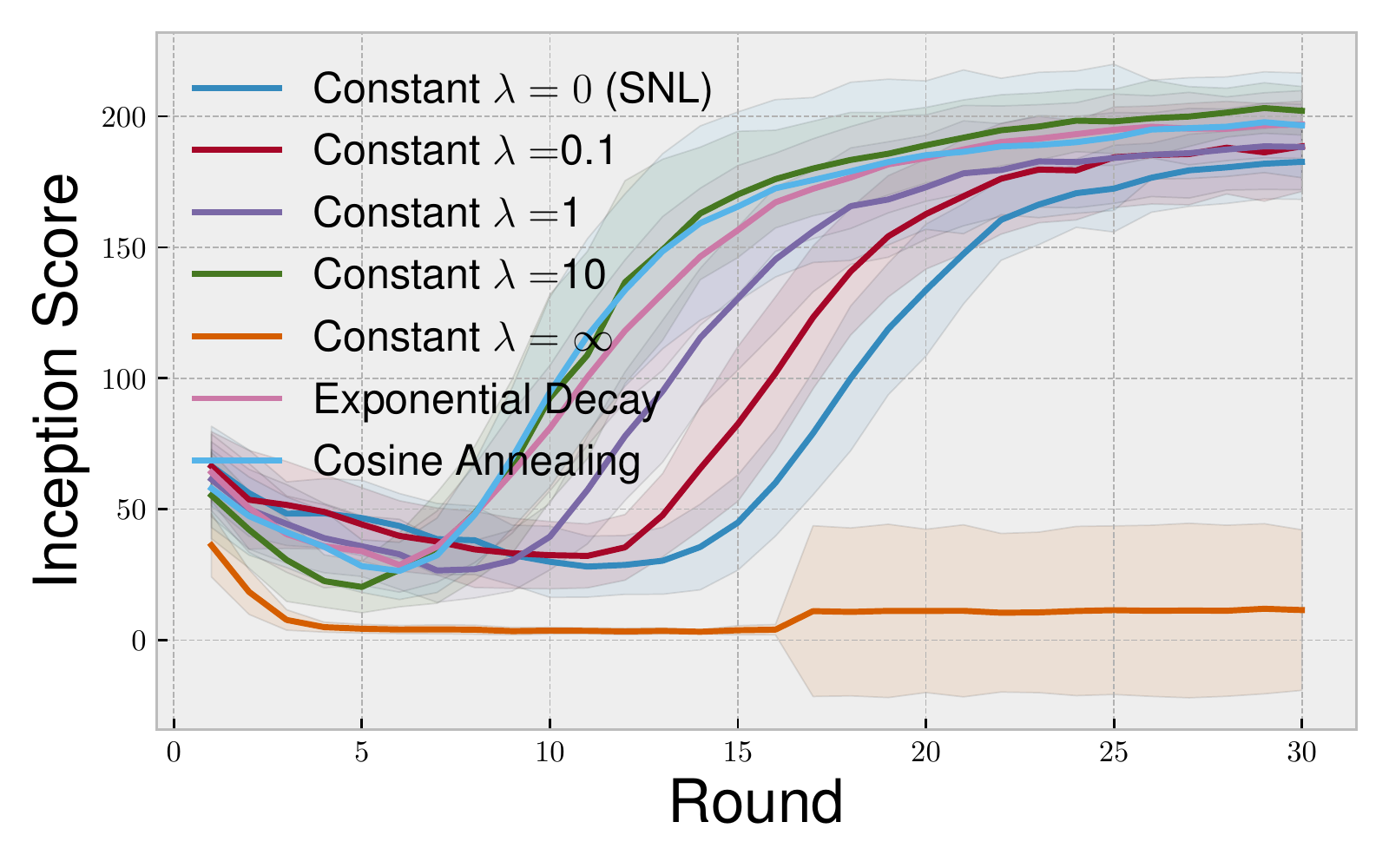}}
    \vskip -0.05in
    \caption{Effect of the regularization coefficient on the inference quality.}
    \label{fig:hyperparameter}
\vskip -0.1in
\end{figure}

\begin{figure*}[t]
\vskip -0.05in
    \centering
    \subfigure[Groundtruth]{\includegraphics[width=.195\linewidth]{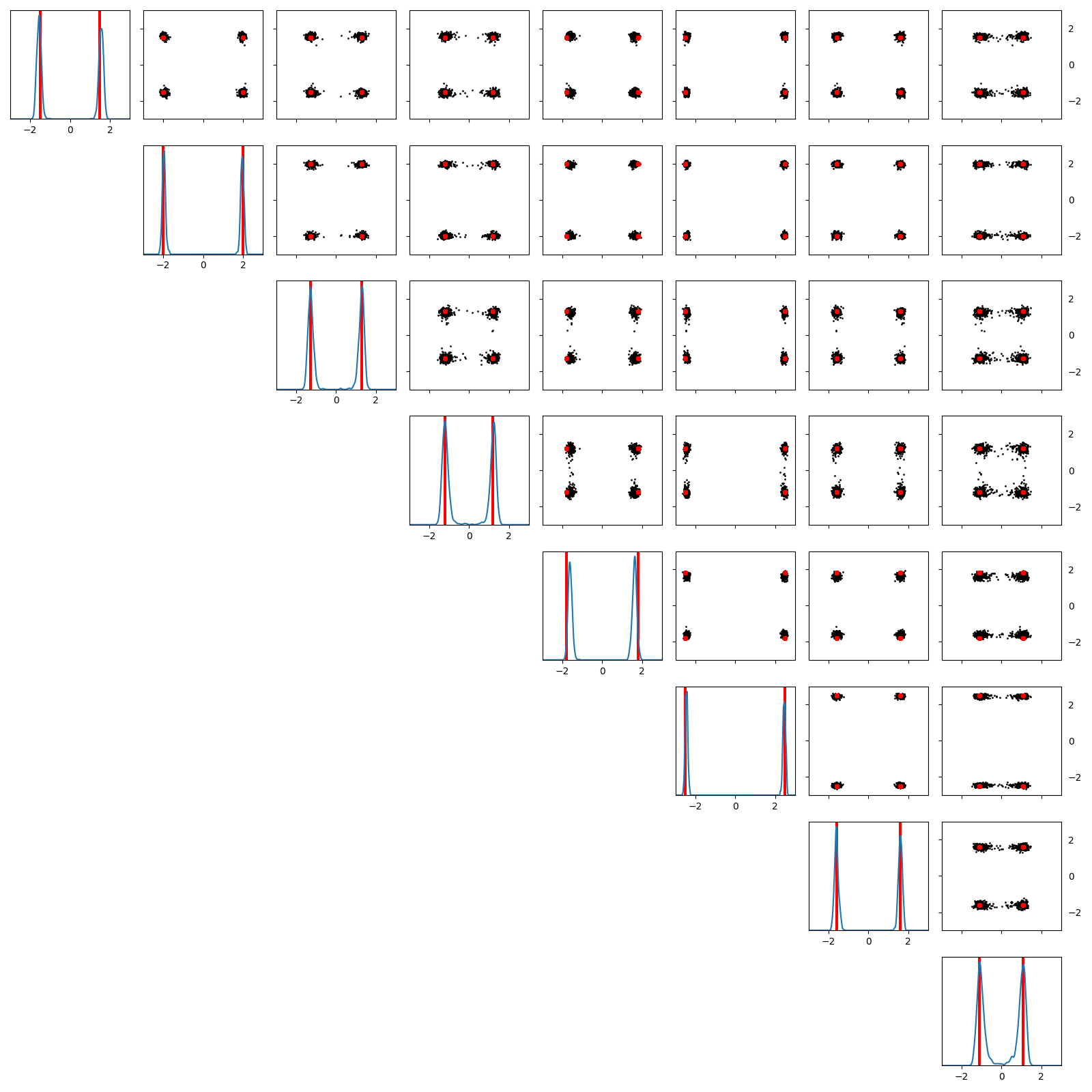}}
    \centering
    \subfigure[APT]{\includegraphics[width=.195\linewidth]{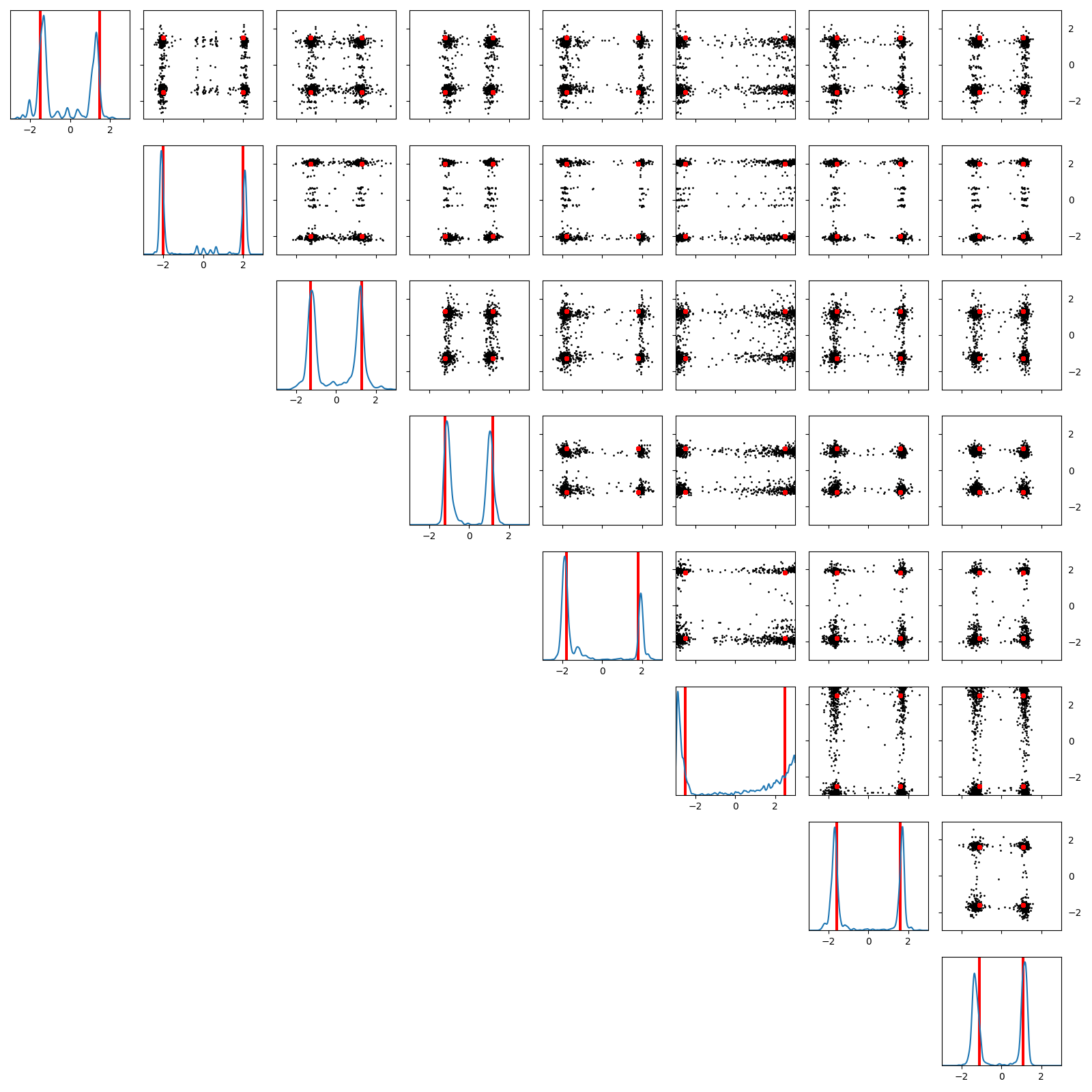}}
    \centering
    \subfigure[AALR]{\includegraphics[width=.195\linewidth]{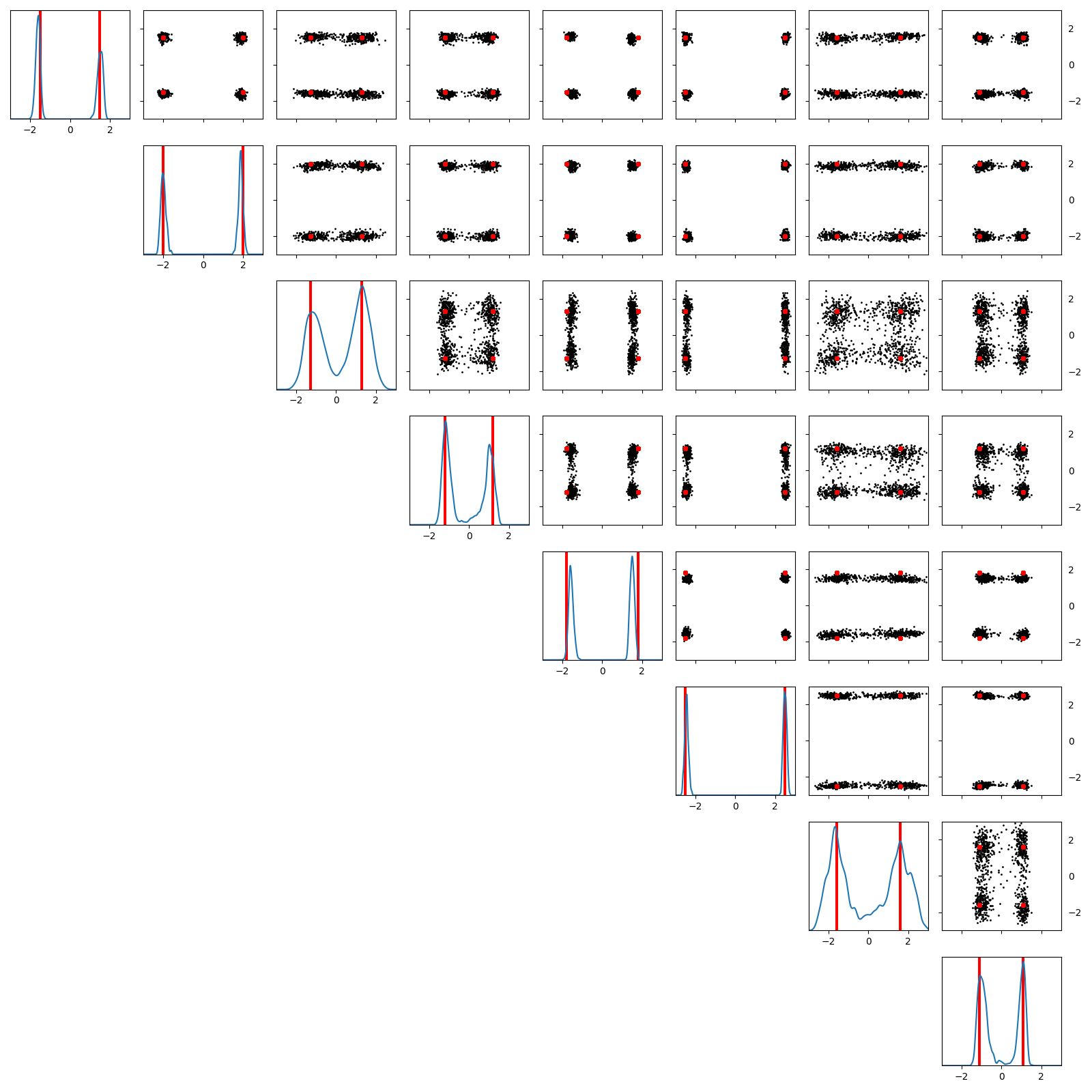}}
    \centering
    \subfigure[SNL]{\includegraphics[width=.195\linewidth]{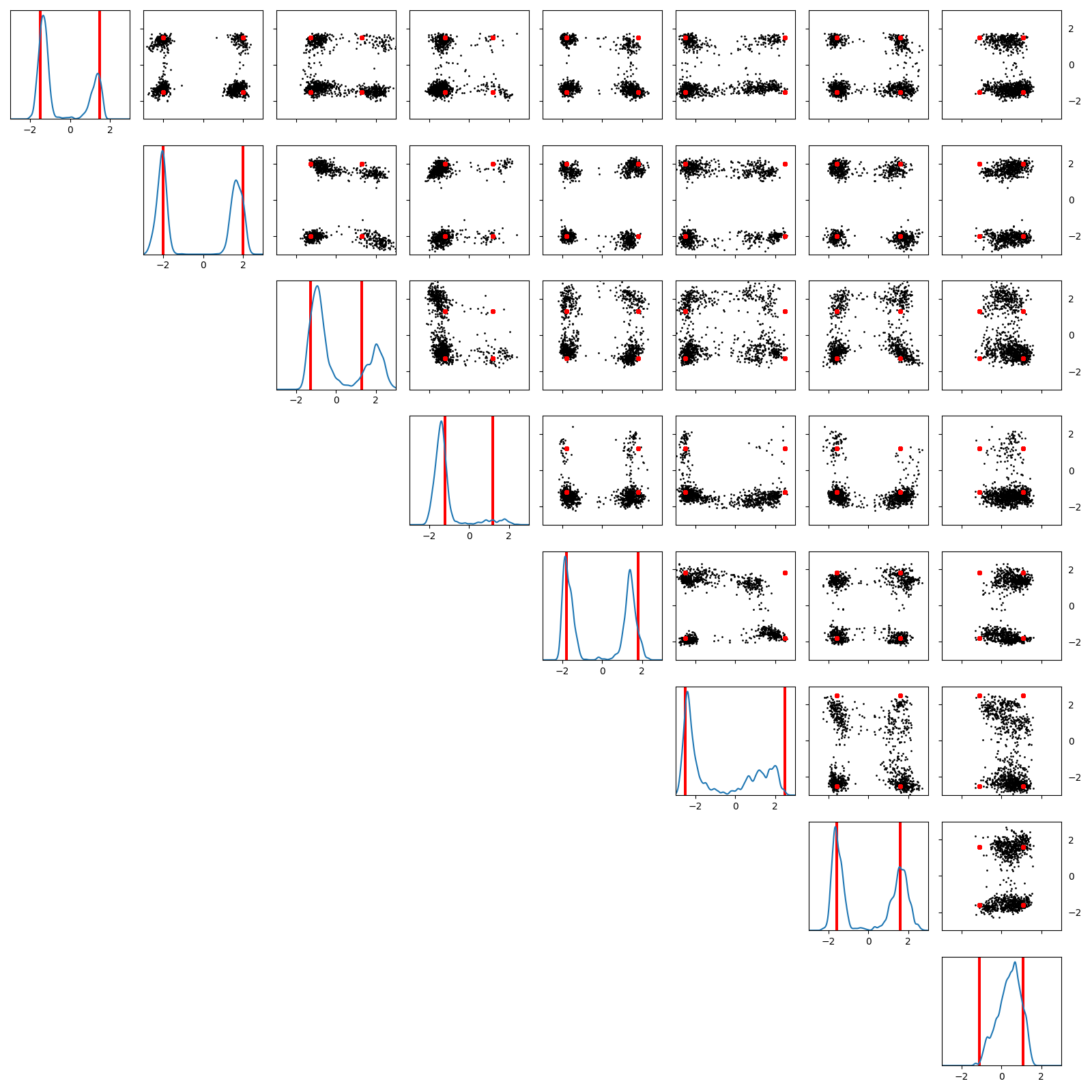}}
    \centering
    \subfigure[SNL+NPR]{\includegraphics[width=.195\linewidth]{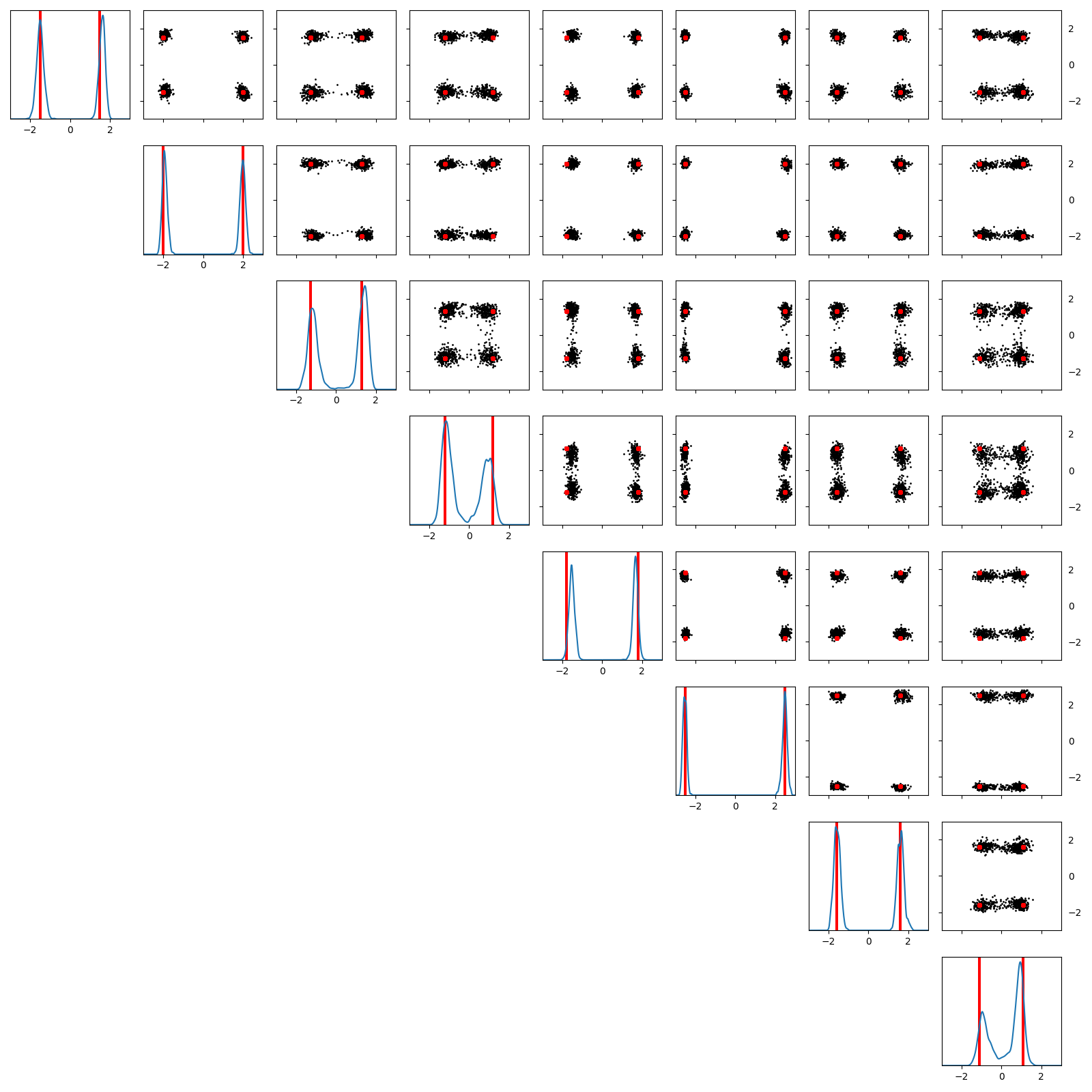}}
    \vskip -0.05in
    \caption{Comparison of the approximate posterior distributions on SLCP-256.}
    \label{fig:marginal_posterior}
    \vskip -0.05in
\end{figure*}

\begin{figure}[t]
\vskip -0.05in
    \centering
    \subfigure[SLCP-16 by Algorithm]{\includegraphics[width=.495\linewidth]{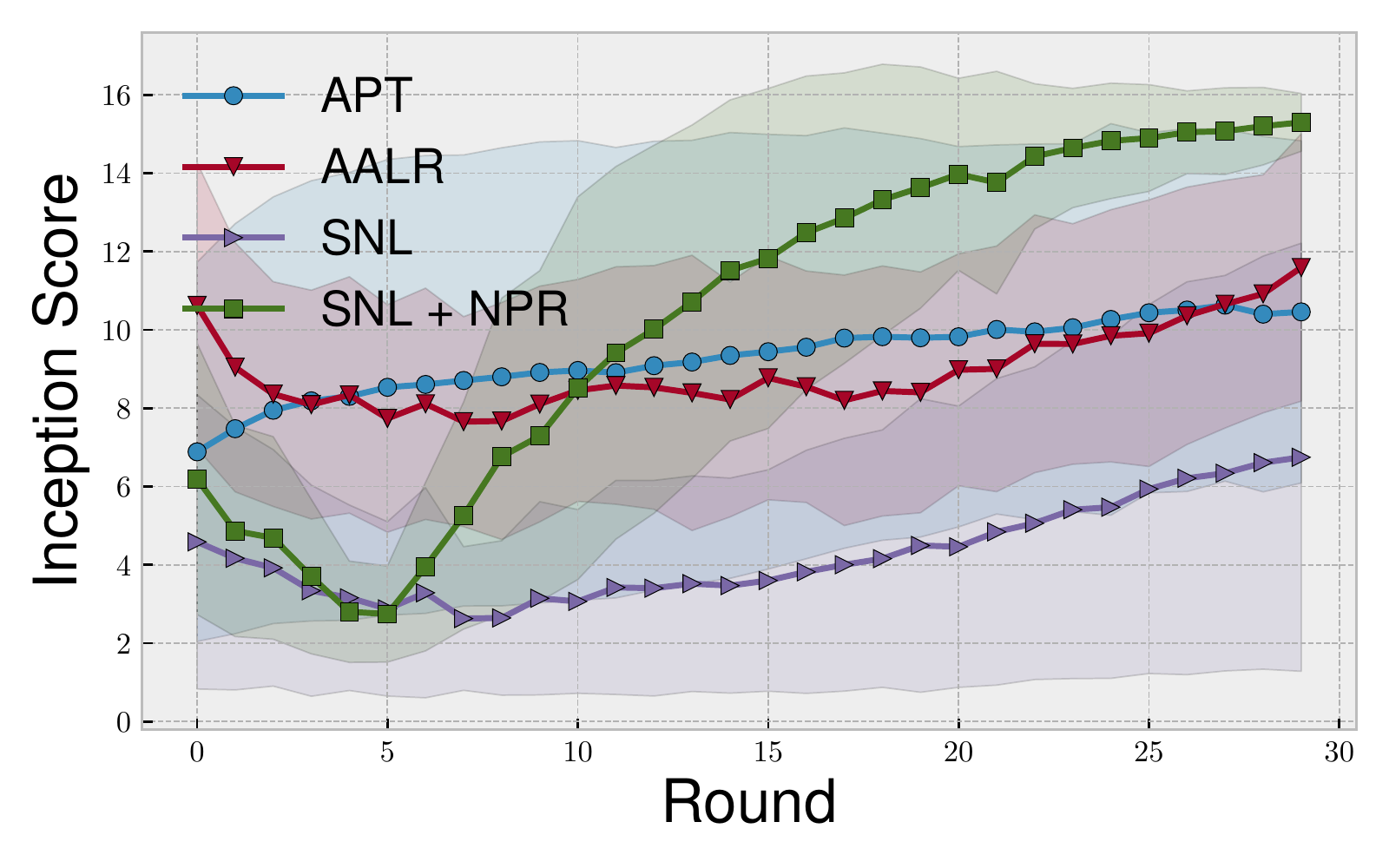}}
    \centering
    \subfigure[SLCP-16 by Dimension]{\includegraphics[width=.495\linewidth]{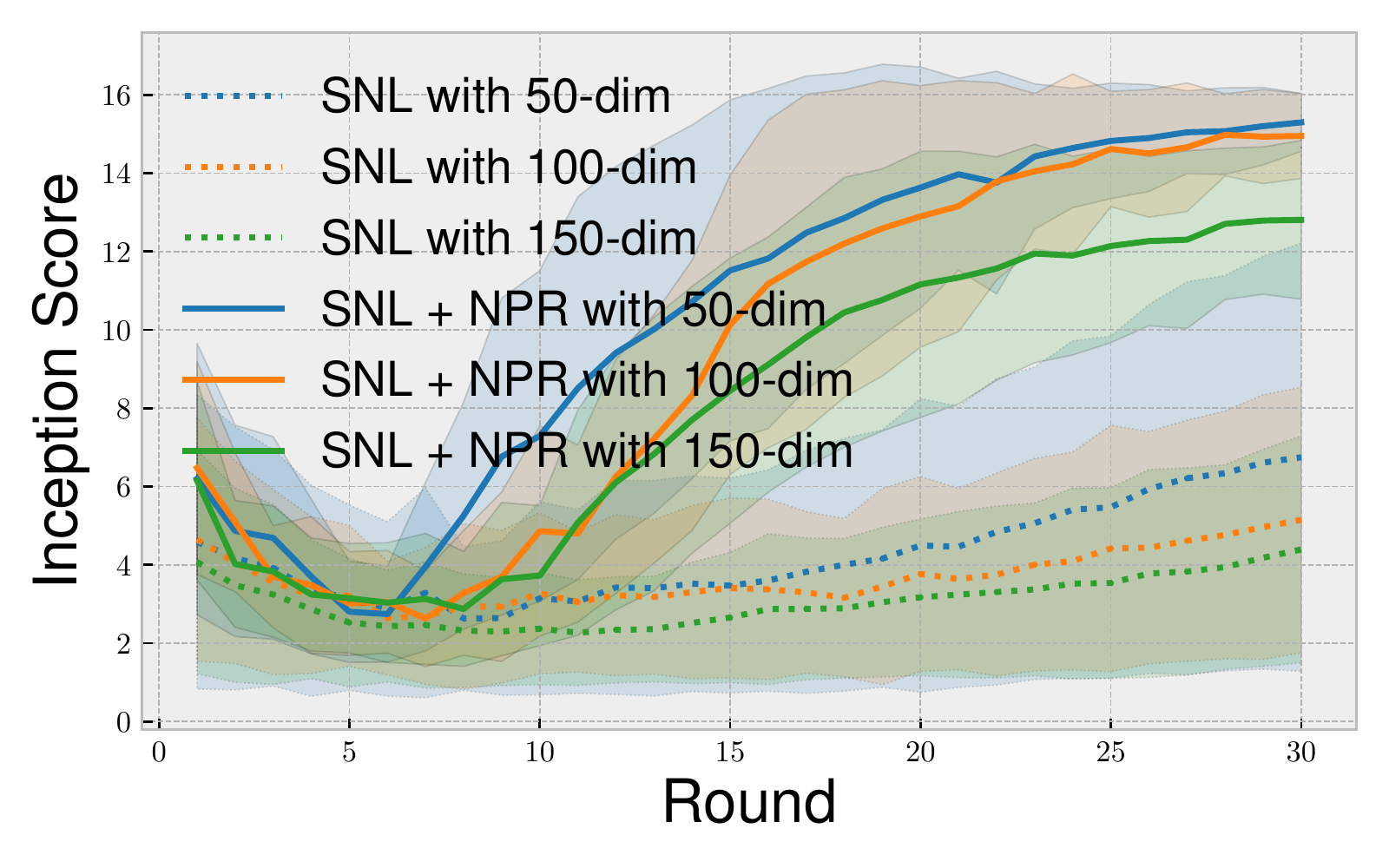}}
\vskip -0.05in
    \caption{IS by round on (a) baselines of 50-dim output and (b) varying dimensions.}
    \label{fig:SLCP-16}
    \vskip -0.05in
\end{figure}

\begin{figure}[t]
\vskip -0.05in
    \centering
    \subfigure[SLCP-256 by Algorithm]{\includegraphics[width=.495\linewidth]{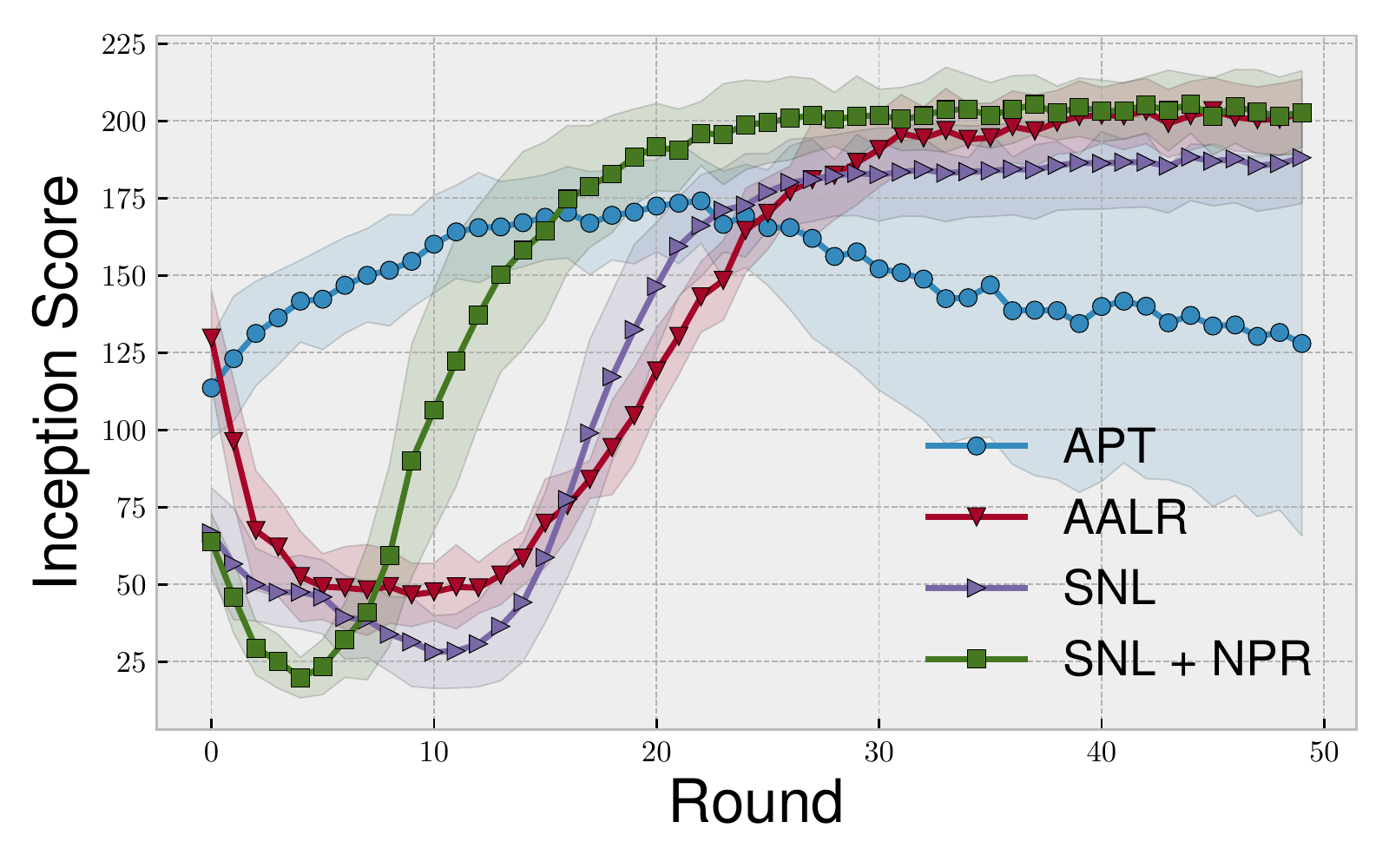}}
    \centering
    \subfigure[SLCP-256 by Dimension]{\includegraphics[width=.495\linewidth]{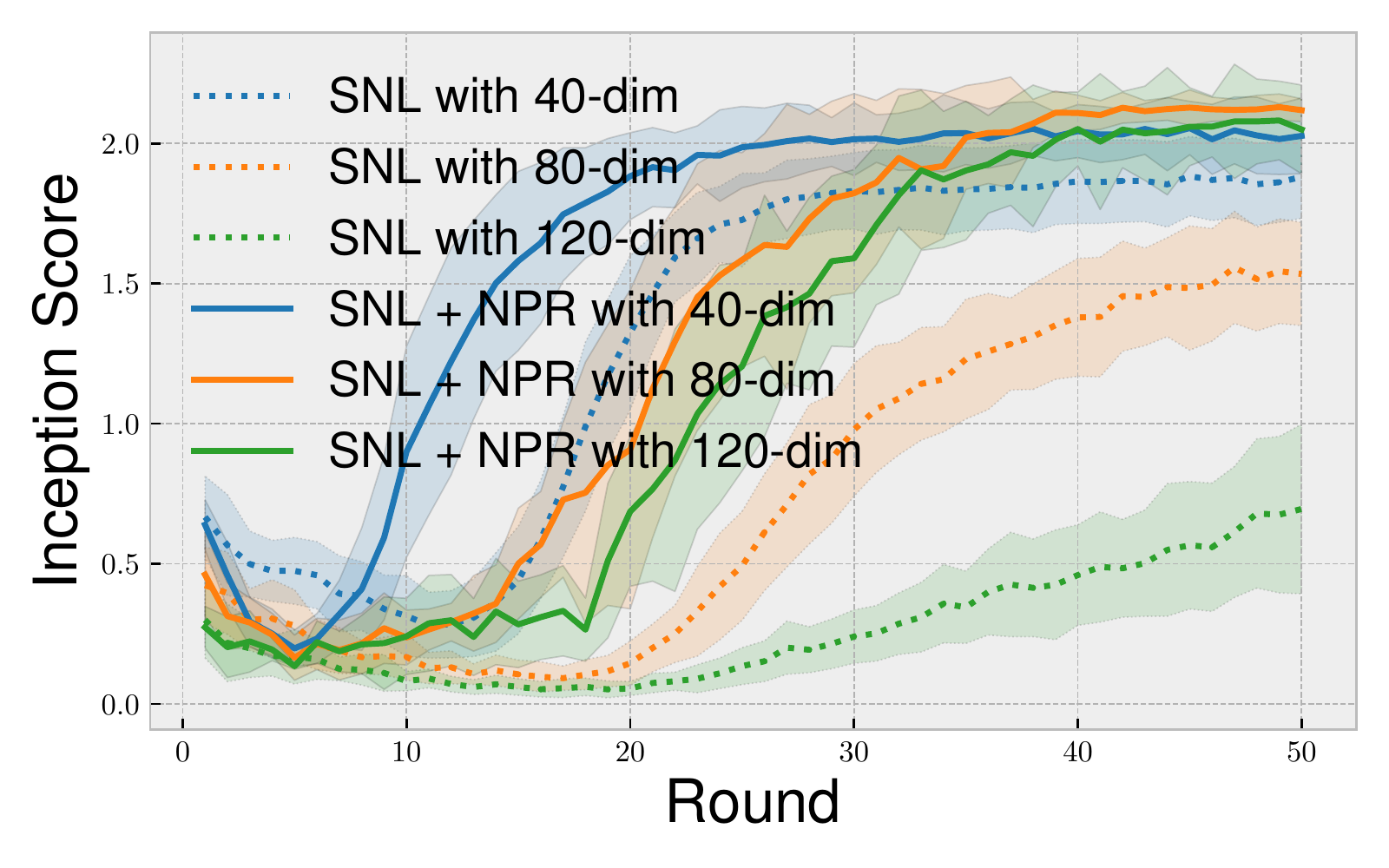}}
    \vskip -0.05in
    \caption{IS by round with (a) baselines of 40-dim output and (b) varying dimensions.}
    \label{fig:SLCP-256}
    \vskip -0.05in
\end{figure}

\begin{figure}[t]
	\centering
		\includegraphics[width=\linewidth]{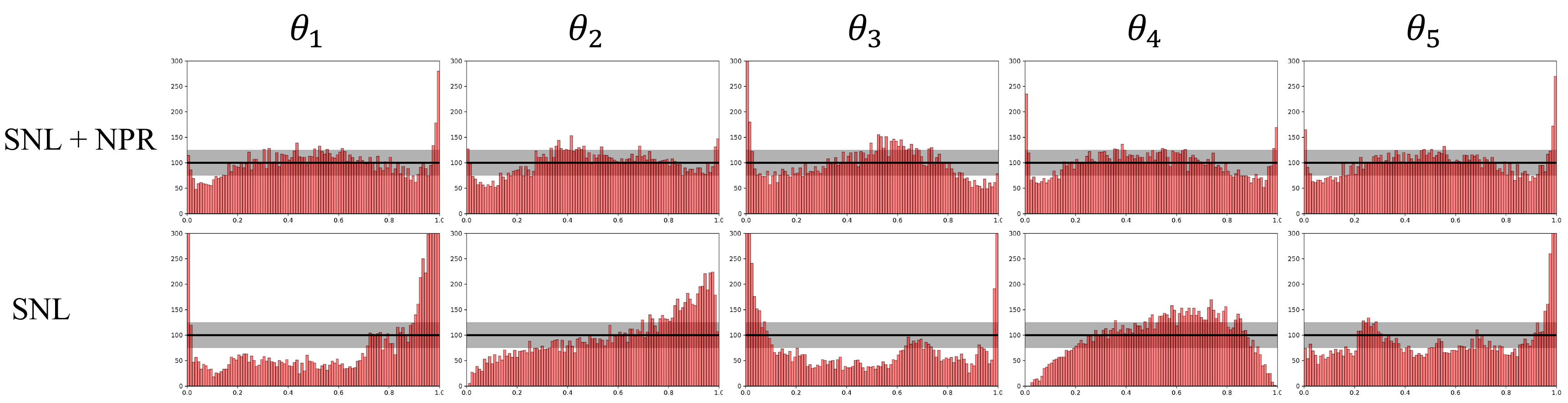}
	\caption{SBC after 10 rounds of inference on the MNIST experiment.}
	\label{fig:MNIST_sbc}
	\vskip -0.05in
\end{figure}

Consistent with the experiment of the toy case in Fig. \ref{fig:contour_PRL}, the multi-round inference in Fig. \ref{fig:hyperparameter} shows that the inference with $\lambda=1$ is faster than that of $\lambda=0$. However, the best $\lambda$ magnitude differs by SLCP-16 and SLCP-256 with $\lambda=1$ and $\lambda=10$, respectively, and this inconsistency by simulation model leads us to propose the $\lambda$-scheduling. Motivated by Theorem \ref{thm:2}, we propose annealing methods for the $\lambda$-scheduling. In Fig. \ref{fig:hyperparameter}, either of the exponential decaying and cosine annealing scheduling methods performs as well as the best $\lambda$ choices, and these adaptive scheduling methods reduce the endeavor of $\lambda$ search. We choose the exponential decaying strategy by default.

\begin{figure}[t]
    \centering
    \subfigure[MEDDIST]{\includegraphics[width=.48\linewidth]{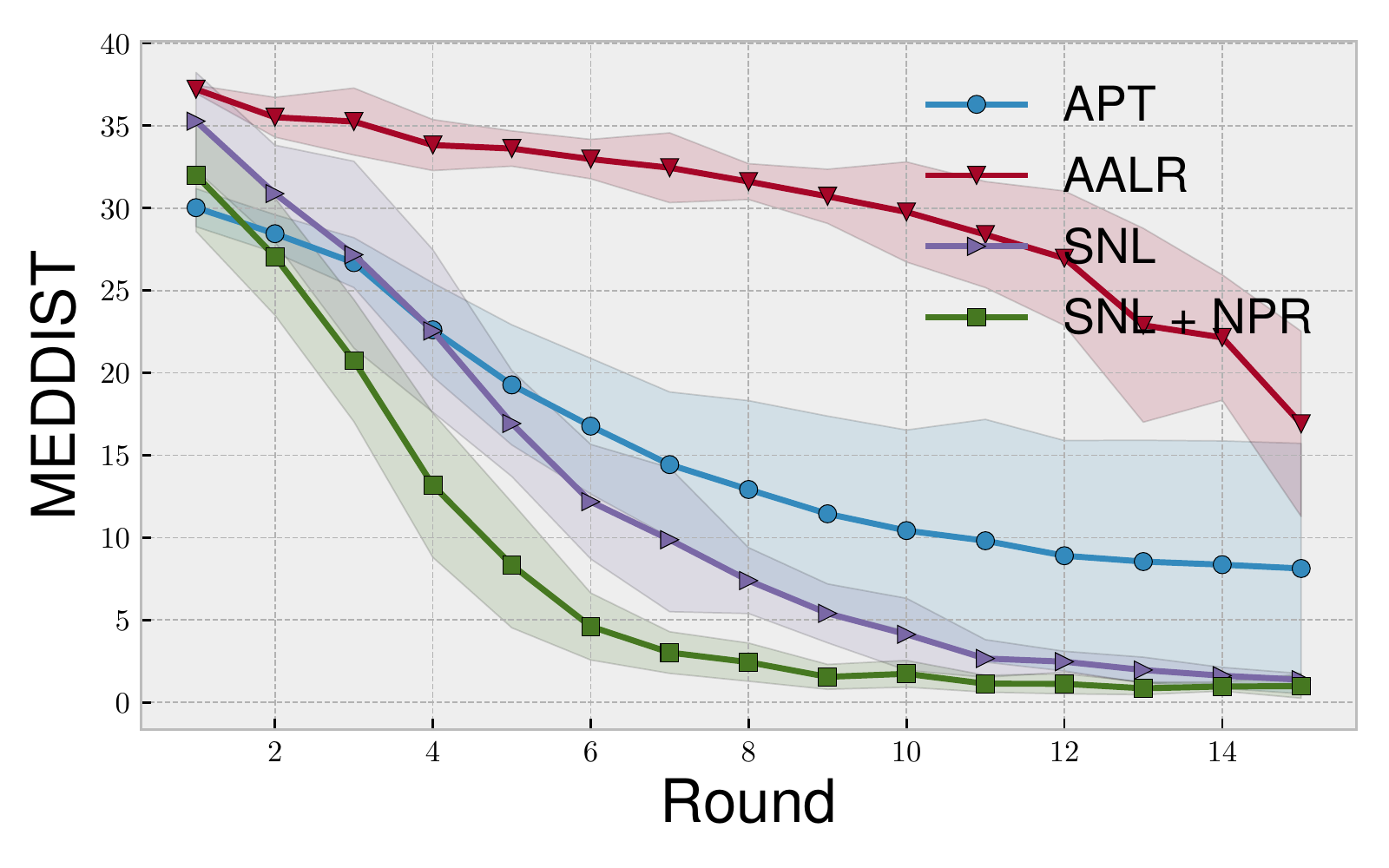}}
    \centering
    \subfigure[NLTP]{\includegraphics[width=.48\linewidth]{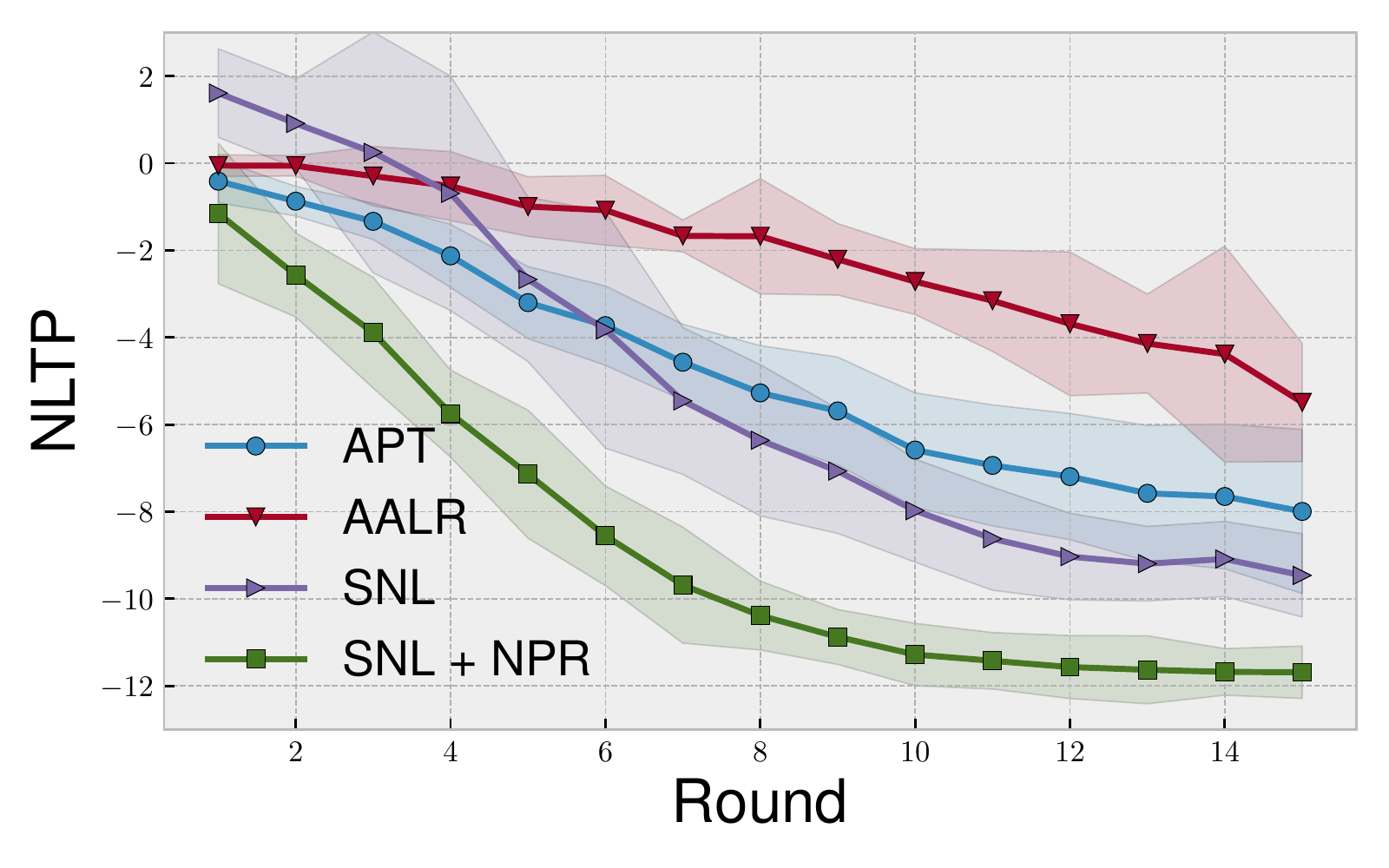}}
    \vskip -0.05in
    \caption{MEDDIST and NLTP on the MNIST experiment.}
    \label{fig:MNIST_meddist}
    \vskip -0.05in
\end{figure}

\begin{figure*}[t]
    \centering
    \includegraphics[width=0.9\linewidth]{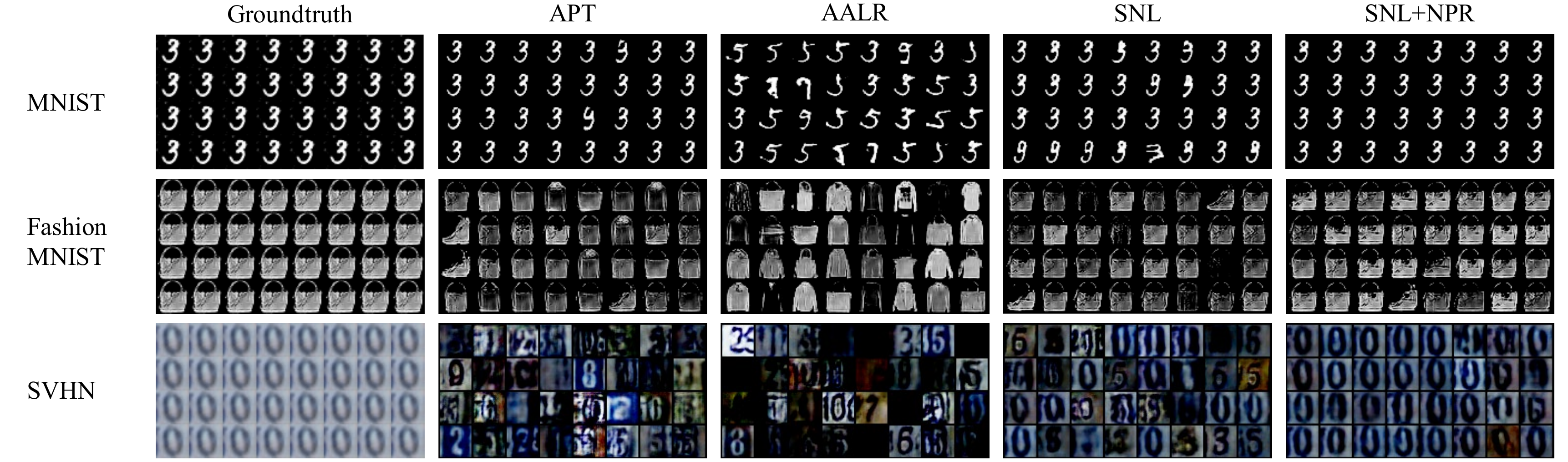}
    \caption{Image restoration based on the given groundtruth image.}
    \label{fig:image_generation}
    \vskip -0.05in
\end{figure*}

\begin{figure}[t]
\vskip -0.05in
    \centering
    \subfigure[SNL]{\includegraphics[width=.45\linewidth]{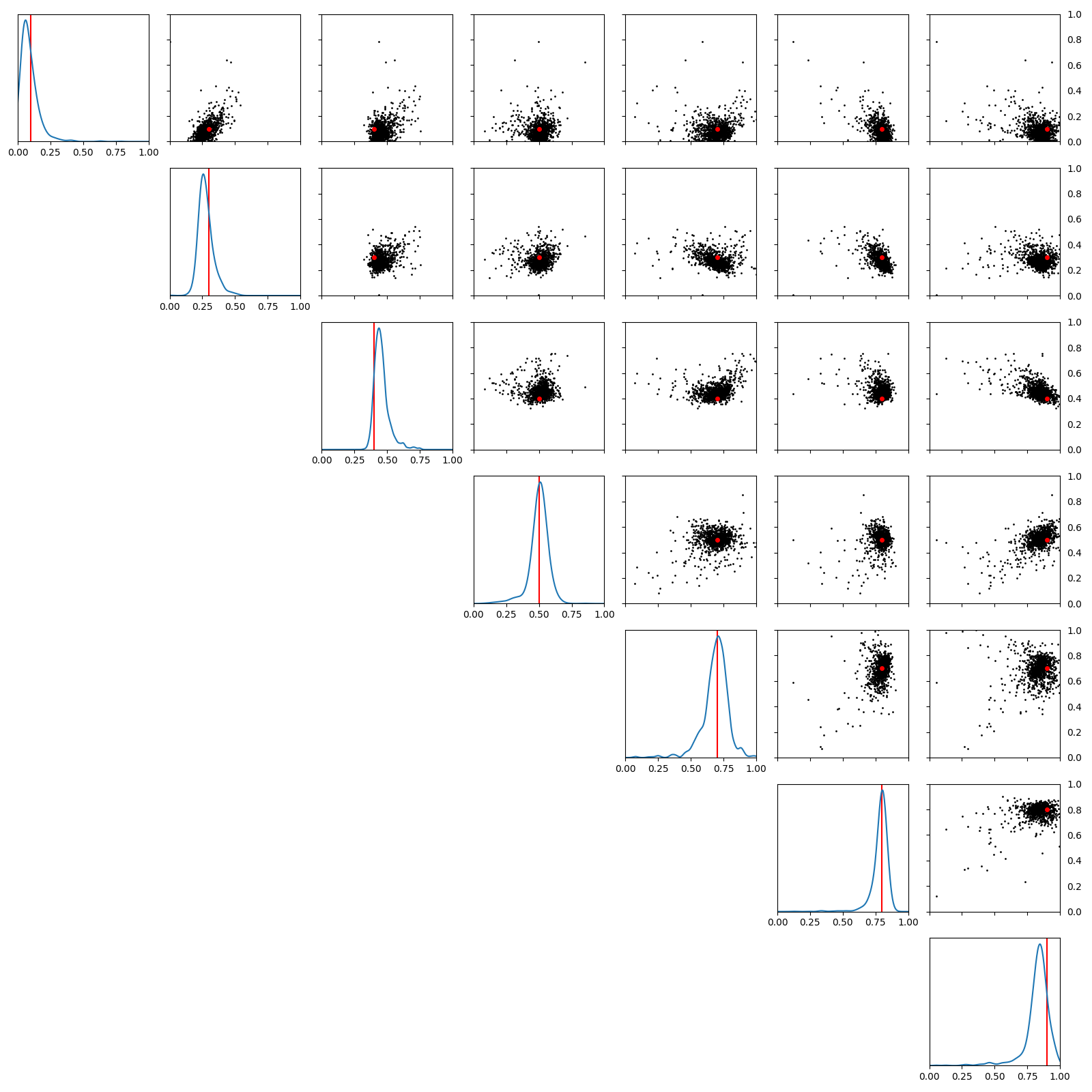}}
    \centering
    \subfigure[SNL+NPR]{\includegraphics[width=.45\linewidth]{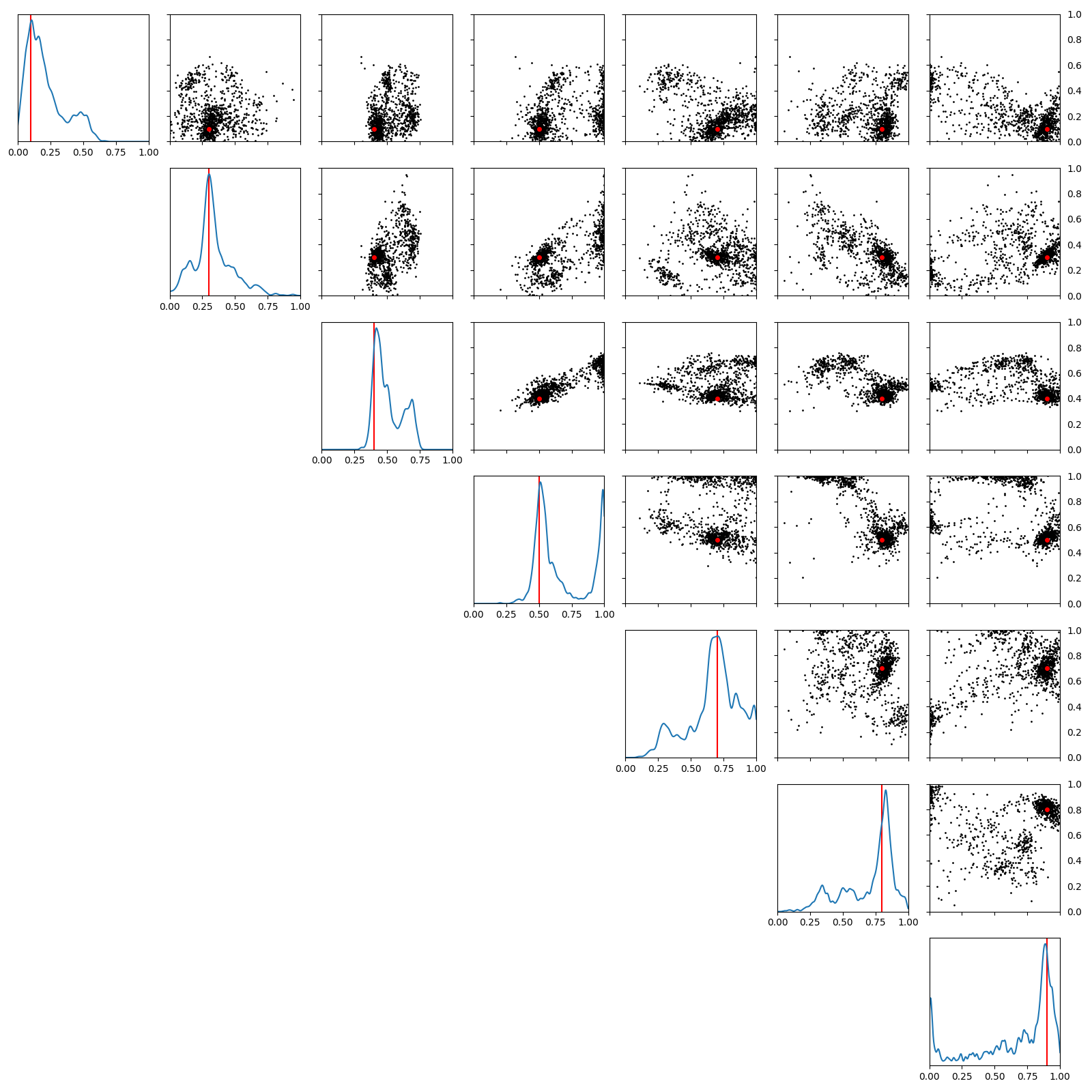}}
    \vskip -0.05in
    \caption{Approximate posterior on Fashion MNIST.}
    \label{fig:FashionMNIST_posterior}
    \vskip -0.05in
\end{figure}

\section{Experiments}\label{sec:Experiments}

\subsection{Experimental Setup}

We experiment with the regularization on a couple of tractable simulations, three science-driven simulation models, and three pre-trained GAN generator models. For the tractable simulations, we utilize the models of the SLCP-family \cite{papamakarios2019sequential}, i.e., SLCP-16/256 \cite{kim2020sequential}, to test the regularization with highly multi-modal posterior. We use the default setting of SLCP-16/256, except for simulation output dimensions, by doubling from 50 to 100 on SLCP-16 and 40 to 80 on SLCP-256. Next, we experiment with realistic simulations without the tractable likelihood. The simulations are 1) the M/G/1 model \citep{papamakarios2019sequential} from the queuing theory; 2) the Ricker model \citep{gutmann2016bayesian} from the field of ecology; and 3) the Poisson model \citep{kim2020adversarial} in the field of physics. We experiment with these simulations with three output dimension variations to test the robustness in high-dimensional cases (see Table \ref{tab:performance_real}). Other than the dimension, we comply with the original setup. Lastly, we experiment with the pre-trained GAN generator \citep{arjovsky2017wasserstein} for the image restoration task. We test on MNIST \citep{lecun1998gradient}, Fashion MNIST \citep{xiao2017fashion}, and SVHN \citep{netzer2011reading}.

We use the Neural Spline Flow \cite{durkan2019neural} for modeling both neural likelihood and neural posterior. We assume that the simulation budget per round is 100 for all but M/G/1 with 20. We run 50 rounds of inference for SLCP-256, 30 for SLCP-16, M/G/1, Ricker, and Poisson, and 15 for GAN generator models. We compare the regularized SNL with SMC-ABC \citep{sisson2007sequential}, SNPE-A \citep{papamakarios2016fast}, SNPE-B \citep{lueckmann2018flexible}, APT \citep{greenberg2019automatic}, AALR \citep{hermans2019likelihood}, and SNL \citep{papamakarios2019sequential}. For the performance metrics, we use Maximum Mean Discrepancy (MMD) \citep{sriperumbudur2010hilbert}, IS, Median distance (MEDDIST) \citep{lueckmann2021benchmarking}, Simulation-Based Calibration (SBC) \citep{papamakarios2019sequential}, as well as Negative Log-likelihood of True Parameters (NLTP) \citep{lueckmann2021benchmarking}. We release the code at \url{https://github.com/Kim-Dongjun/Neural_Posterior_Regularization}.

\subsection{Experimental Result}

Tables \ref{tab:performance} and \ref{tab:performance_real} present the quantitative results of each simulation with 30 replications.  $N=100$ for all but M/G/1 with $N=20$, and run the 50 inference rounds. The regularized algorithm shows robustness and finds most modes compared to the baselines. Fig. \ref{fig:marginal_posterior} presents the approximate posterior. It shows that the regularized SNL performs the best out of baselines with a limited simulation budget. Empirically, with $N=100$, the regularized SNL requires nearly 20 rounds to capture all modes.

Figures \ref{fig:SLCP-16} and \ref{fig:SLCP-256} illustrate the performance by rounds. The regularized SNL consistently outperforms the baselines in terms of the IS. In particular, the regularized SNL could be framed as a mix of APT and SNL in its loss design, but the regularized SNL outperforms both APT and SNL. Fig. \ref{fig:SLCP-16}-(b) and Fig. \ref{fig:SLCP-256}-(b) empirically demonstrate that the regularization gives robust inference across diverse dimensions.

Figure \ref{fig:image_generation} shows the image restoration from a single shot of the given image on MNIST, Fashion MNIST, and SVHN. The regularized SNL significantly outperforms the baselines on all tasks regarding the generated sample quality. In contrast to SNL, the regularized SNL finds multiple modes in Fig. \ref{fig:FashionMNIST_posterior}. Quantitatively, we compare the regularized SNL with baselines in Fig. \ref{fig:MNIST_sbc} and Fig. \ref{fig:MNIST_meddist} on the MNIST experiment.

\section{Conclusion}\label{sec:Conclusion}
This paper proposes a new regularization for \textit{likelihood-free inference}. We approximate this regularization as NPR, and the regularized SNL can be interpreted as the joint combination of APT and SNL in a unified framework. The optimality of the regularized SNL is driven as a closed-form solution, and the tuning of the regularization magnitude does not require additional cost. The experimental results support that the proposed regularization method takes benefits from both SNL and APT.

\bibliographystyle{elsarticle-num-names}
\bibliography{reference.bib}

\end{document}